\newcommand{\mytitle}{Explainable Reinforcement Learning via\\Temporal Policy Decomposition}
\ShortHeadings{\mytitle}{Ruggeri, Russo, Inam, and Johansson}
\begin{document}

\title{\mytitle}
\author{%
    \name Franco Ruggeri
    \email franco.ruggeri@ericsson.com \\
    \addr Ericsson Research, KTH Royal Institute of Technology
    \email fruggeri@kth.se \\
    \addr Stockholm, Sweden
    \AND
    \name Alessio Russo
    \email arusso2@bu.edu \\
    \addr Boston University \\
    Boston, USA
    \AND
    \name Rafia Inam
    \email rafia.inam@ericsson.com \\
    \addr Ericsson Research, KTH Royal Institute of Technology 
    \email raina@kth.com \\ 
    \addr Stockholm, Sweden
    \AND
    \name Karl Henrik Johansson
    \email kallej@kth.se \\
    \addr KTH Royal Institute of Technology \\
    Stockholm, Sweden
}
\editor{My editor}

\maketitle

\begin{abstract}%
We investigate the explainability of \gls{rl} policies from a temporal perspective, focusing on the sequence of future outcomes associated with individual actions. In \gls{rl}, value functions compress information about rewards collected across multiple trajectories and over an infinite horizon, allowing a compact form of knowledge representation. However, this compression obscures the temporal details inherent in sequential decision-making, presenting a key challenge for interpretability. We present \gls{tpd}, a novel explainability approach that explains individual \gls{rl} actions in terms of their \gls{efo}. These explanations decompose generalized value functions into a sequence of \glspl{efo}, one for each time step up to a prediction horizon of interest, revealing insights into when specific outcomes are expected to occur. We leverage fixed-horizon temporal difference learning to devise an off-policy method for learning \glspl{efo} for both optimal and suboptimal actions, enabling contrastive explanations consisting of \glspl{efo} for different state-action pairs. Our experiments demonstrate that \gls{tpd} generates accurate explanations that 
\begin{enumerate*}[label=(\roman*)]
    \item clarify the policy's future strategy and anticipated trajectory for a given action and
    \item improve understanding of the reward composition, facilitating fine-tuning of the reward function to align with human expectations.
\end{enumerate*}
\end{abstract}

\begin{keywords}
  \glsentrylong{xrl}, \glsentrylong{xai}, \glsentrylong{rl}, \glsentrylong{ai}, Explainability
\end{keywords}

\glsresetall
\section{Introduction}

\Gls{rl} has demonstrated considerable success in a wide range of applications, including complex domains like robotics \citep{kober-2013,polydoros-2017} and telecommunications \citep{saxena-2022,vannella-2022}. Despite these achievements, a persistent challenge in \gls{rl} is its lack of interpretability. The opacity of \gls{rl} solutions hinders their wider adoption and deployment to real-world applications, especially in safety-critical or highly regulated fields. This challenge has given rise to the field of \gls{xrl}, which aims to provide user-interpretable explanations to make \gls{rl} systems more transparent and interpretable \citep{milani-2023,puiutta-2020}. A clear explanation of why a model makes certain decisions is essential for gaining trust from human users as well as ensuring robust and reliable deployment.

Research in \gls{xrl} has produced various methods to address the interpretability challenge. One line of research has focused on explaining the function approximators employed in deep \gls{rl} algorithms, where deep neural networks are employed to represent value functions or policies \citep{greydanus-2018,lundberg-2017}. These methods are generally designed for supervised learning settings, thus neglecting the sequential nature distinctive of \gls{rl}. A promising alternative direction consists of learning additional value functions, often referred to as \glspl{gvf}, to explain the dynamics of the \gls{rl} environment under a given \gls{rl} policy \citep{juozapaitis-2019,yau-2020,lin-2021}. These \glspl{gvf} are usually designed to decompose across a particular dimension the primary reward-based value function, which is used by the \gls{rl} agent for decision-making.

While these approaches offer valuable insights, they completely overlook the time dimension of \gls{rl}-based decision-making. A core problem in explaining \gls{rl} behavior lies, in fact, in how value functions obscure temporal information. Specifically, value functions summarize the expected rewards over multiple trajectories and an infinite horizon, effectively hiding \textit{when} and \textit{where} rewards are expected to be collected. The temporal ambiguity is especially problematic due to delayed rewards (i.e., rewards that are the result of a sequence of actions). Therefore, effective explanations should clarify the underlying strategy of an \gls{rl} policy: What rewards does the policy intend to collect, and over what time frame? Incorporating this temporal aspect into explanations is essential for a more comprehensive understanding of \gls{rl} behavior. 

One attempt to address this gap was made by \citet{vanderwaa-2018}, who proposed analyzing the most likely future trajectory under an \gls{rl} policy and comparing it with a counterfactual aligned with human expectations. However, this approach has two notable limitations:
\begin{enumerate*}[label=(\roman*)]
    \item it exclusively focuses on the most likely trajectory, potentially neglecting a significant portion of the possible outcomes in stochastic environments, and 
    \item it assumes a fully known environment model to compute the most likely trajectory with Monte Carlo simulations.
\end{enumerate*}
Additionally, while model-based \gls{rl} algorithms aim to learn the environment's dynamics by modeling the next-state distribution \citep{moerland-2023}, they cannot accurately predict state distributions at future time steps due to the challenges of uncertainty propagation and compound errors. As a result, there remains a need for methods that can explain actions in terms of their outcomes in future time steps while also accounting for the inherent uncertainty of \gls{rl} decisions.

In this paper, we present \gls{tpd}, a novel \gls{xrl} method that explains \gls{rl} actions in terms of their \glspl{efo}. Given an initial state-action pair, an outcome function, and an \gls{rl} policy, an \gls{efo} provides information about the outcome after a specific number of time steps. Our method generates explanations, consisting of \glspl{efo} for each future time step up to a fixed prediction horizon, that offers time-granular insights into the expected future trajectory. This approach is orthogonal to existing value decomposition techniques \citep{juozapaitis-2019,yau-2020,lin-2021}, as the set of \glspl{efo} for a range of time steps can be seen as a temporal decomposition of the \gls{gvf} associated with the outcome function. In this regard, we explore two important classes of outcomes---rewards and events---that yield meaningful temporal decompositions. To learn \glspl{efo}, we propose an off-policy learning method based on \gls{fhtd} learning \citep{deasis-2020}. The off-policy nature of \gls{tpd} supports the generation of contrastive explanations, often preferred by human users \citep{miller-2019}, by comparing optimal and non-optimal actions. Additionally, we prove the convergence of tabular \gls{fhtd} learning under less restrictive assumptions than those presented in \citet{deasis-2020}.

\paragraph{Contributions} Our main contributions are summarized as follows:
\begin{itemize}
    \item We introduce \gls{tpd}, a novel \gls{xrl} method that explains \gls{rl} actions by decomposing \glspl{gvf} across the temporal dimension. The generated explanations offer time-granular insights into the consequences of each decision.
    \item We analyze \gls{tpd} for two relevant classes of outcomes---rewards and events---providing a flexible framework for interpreting \gls{rl} behavior.
    \item We prove the convergence of \gls{fhtd} learning in tabular settings under less restrictive assumptions compared to the proof by \citet{deasis-2020}.
    \item We conduct an empirical evaluation of \gls{tpd}, demonstrating its ability to generate intuitive and reliable explanations.
\end{itemize}

\paragraph{Outline} The remainder of the paper is organized as follows. In \cref{sec:related-work}, we discuss related work. In \cref{sec:background}, we provide a brief background on \gls{rl} and \gls{fhtd} learning. In \cref{sec:temporal-policy-decomposition}, we present the \gls{tpd} method. In \cref{sec:experiments}, we describe the experiments and discuss the results. Finally, in \cref{sec:conclusion}, we conclude with a discussion and outline directions for future work.

\section{Related Work}
\label{sec:related-work}

The literature of \gls{xrl} presents methods that tackle the interpretability challenge from different perspectives, as outlined by \citet{milani-2023}. The most relevant line of research for this paper is the training of auxiliary value functions, sometimes referred to as \glspl{gvf}, to offer a more interpretable view of the primary reward-based value function used for decision-making. Based on this approach, \citet{juozapaitis-2019} explored the idea of decomposing the value function into separate components that align with the reward composition, assuming that the reward is formulated as a sum of reward components. The value composition can then be used as an explanation of how each reward component affects decision-making. \citet{lin-2021} generalized this method by introducing user-defined outcomes in place of traditional rewards. Specifically, they proposed a two-stage model where
\begin{enumerate*}[label=(\roman*)]
    \item the first stage learns the \glspl{gvf} as functions of state and action, capturing the user-defined outcomes, and
    \item the second stage then uses these \glspl{gvf} to predict the reward-based value function.
\end{enumerate*}
This architecture can be trained end-to-end and generate explanations in terms of \glspl{gvf}, which represent expected discounted sums of the user-defined outcomes. Instead, \citet{yau-2020} proposed to train, in parallel to the reward-based value function, state-action occupancy maps estimating the expected occurrence of state-action pairs in the future trajectory. Despite providing relevant insights, none of these works considered the temporal dimension, as they all rely on discounted sums spanning an infinite horizon. In this paper, while we use the \gls{gvf} concept and the idea of learning additional \glspl{gvf} to enhance the transparency of the main reward-based value function, we focus on a temporal decomposition of \glspl{gvf} to explain actions in terms of their expected outcomes in future time steps.

Another closely related study is by \citet{vanderwaa-2018}, who devised a method for generating contrastive explanations by comparing the most likely trajectory under the given \gls{rl} policy with the most likely trajectory from a different policy aligned with human expectations. This method produces explanations as a sequence of abstracted states, actions, and rewards. However, it has two main limitations:
\begin{enumerate*}[label=(\roman*)]
\item it focuses exclusively on the most likely trajectory, potentially overlooking a significant range of possible outcomes in stochastic environments, and
\item it requires a fully known environment model to identify the most likely trajectory using Monte Carlo simulations.
\end{enumerate*}
In contrast, our approach overcomes these limitations by predicting \gls{efo} based on the full distribution of trajectories and by learning these \glspl{efo} in a model-free manner.

Model-based \gls{rl}, which involves explicitly learning the environment’s dynamics to guide decision-making \citep{polydoros-2017}, is also a related research area. State-of-the-art methods in this field utilize ensembles of probabilistic neural networks to model the next-state distribution \citep{chua-2018,janner-2019}. These dynamics models can then generate trajectories through sequential rollouts for planning or for data augmentation. While such trajectories could, in principle, be used to explain actions, there are several limitations. First, a well-known challenge with such models is the compounding error, where small inaccuracies accumulate over a rollout, leading to significant errors in long-horizon trajectories. Second, these models do not estimate the trajectory distribution, as it is not possible to propagate uncertainty analytically over multiple time steps. Thus, an explanation would rely on a limited set of sampled trajectories, which would inevitably capture only a fraction of the full trajectory distribution. A method to manage uncertainty propagation through the dynamics model was introduced by \citet{deisenroth-2011}, using Gaussian processes and Gaussian approximations at each time step. While these approximations have been shown effective for policy optimization, they are problematic when trajectories are used directly for explainability, where trustworthiness and reliability are crucial. In this paper, we take a different approach by designing models that estimate the desired quantities directly at specific future time steps, avoiding relying on rollouts and dynamics models altogether.

\section{Background}
\label{sec:background}

In this section, we briefly present the technical background and notation on \gls{rl} and \gls{fhtd} learning. These concepts are extensively used in the remainder of the paper.

\subsection{\glsentrylong{rl}}

An \gls{rl} problem is mathematically modeled as a \gls{mdp} \citep{sutton-2018}. An \gls{mdp} is defined by a tuple $\mathcal{M}= \langle \mathcal{S}, \mathcal{A}, p, r, \gamma \rangle$, where $\mathcal{S}$ is the state space, $\mathcal{A}$ is the action space, $p : \mathcal{S} \times \mathcal{A} \times \mathcal{S} \to [0,1]$ are the transition dynamics, $r : \mathcal{S} \times \mathcal{A} \times \mathcal{S} \to [-1,1]$ is the reward function, which we assume bounded\footnote{We assume intervals $[-1,1]$ to simplify the analysis, which can be extended to generic bounded intervals. The same applies to outcome functions.}, and $\gamma \in [0,1]$ is the discount factor. Specifically, at time $t \in \mathbb{Z}_{\ge0}$, an agent observes the environment state $s_t \in \mathcal{S}$ and selects an action $a \in \mathcal{A}$. Then, the \gls{mdp} transitions to a new state $s_{t+1} \sim p(\cdot | s_t,a_t)$ and gives the agent a reward $r_t = r(s_t,a_t,s_{t+1})$. An \gls{mdp} is called finite if both its state and action spaces are discrete and continuous if either the state or action space is continuous. A policy $\pi: \mathcal{S} \to \Delta(\mathcal{A})$ defines the probability $\pi(a|s)$ of taking action $a \in \mathcal{A}$ in state $s \in \mathcal{S}$. The state-action value function under a policy $\pi$, denoted $Q^{\pi}$, represents the expected return collected by the policy $\pi$ conditioned to an initial state-action pair:
\begin{align}
    \label{eq:q-function}
    Q^\pi(s,a) &= \mathbb{E}\left[\sum_{t=0}^{\infty} \gamma^t r_t \middle| s_0=s,a_0=a, \pi \right]
\end{align}
where the condition on $\pi$ indicates $a_t \sim \pi(\cdot|s_t)$ for $t>0$.
Instead, the state value function under a policy $\pi$, denoted $V^{\pi}$, is conditioned only on the initial state: $V^{\pi}(s) = \mathbb{E}_{a \sim \pi(\cdot|s)} [Q^{\pi}(s,a)]$. A generalization of the value function concept is the \gls{gvf}, which replaces rewards with generic outcomes \citep{sutton-2011,lin-2021}. Specifically, given a measurable and bounded outcome function $o: \mathcal{S} \times \mathcal{A} \times \mathcal{S} \to [-1,1]$, with $o_t = o(s_t,a_t,s_{t+1})$, the state-action \gls{gvf} is:
\begin{align}
    Q^{\pi}_o(s,a) = \mathbb{E}\left[\sum_{t=0}^{\infty} \gamma^t o_t \middle| s_0=s, a_0=a, \pi \right]
\end{align}
where $a_t \sim \pi(\cdot|s_t)$ for $t>0$. The two fundamental \gls{rl} problems are \textit{prediction} and \textit{control}. Prediction consists of learning the value function under a given policy $\pi$, $V^{\pi}$ or $Q^{\pi}$, whereas control involves learning an optimal policy $\pi^*$ that maximizes the value in each state. An \gls{rl} algorithm learns to perform prediction or control based on experiences obtained through interactions between the agent and the environment without prior knowledge of the transition probabilities.

\subsection{\glsentrylong{fhtd} Learning}

The \textit{fixed-horizon} state-action value function under a policy $\pi$, denoted $Q_h^{\pi}$, is defined as the expected return collected over a fixed horizon: 
\begin{align}
    Q^{\pi}_h(s,a) = \mathbb{E}\left[\sum_{t=0}^{h-1} \gamma^t r_t \middle| s_0=s,a_0=a, \pi \right]
\end{align}
where $a_t \sim \pi(\cdot|s_t)$ for $t=1,\dots,h-1$. A state-of-the-art \gls{rl} prediction algorithm that learns this function is \gls{fhtd} learning \citep{deasis-2020}. Consider a transition at time $t$ from state $s_t \in \mathcal{S}$ to state $s_{t+1} \in \mathcal{S}$ after taking action $a_t \in \mathcal{A}$, with reward $r_t = r(s_t, a_t, s_{t+1})$. In a finite \gls{mdp}, let $\hat{Q}^{(t)}_h \in \mathbb{R}^{|\mathcal{S}| \times |\mathcal{A}|}$ be the estimate of $Q^{\pi}_h$ at time $t$. The \gls{fhtd} update is:
\begin{align}
    \hat{G}_h &= r_t + \gamma \hat{Q}_{h-1}(s_{t+1},a_{t+1}), \; a_{t+1} \sim \pi(\cdot|s_{t+1}) \\
    \label{eq:fhtd-learning}
    \hat{Q}^{(t+1)}_h(s_t,a_t) &\gets \hat{Q}^{(t)}_h(s_t,a_t) + \alpha_{t,h}(s_t,a_t) [\hat{G}_h - \hat{Q}^{(t)}_h(s_t,a_t)]
\end{align}
for $h=0, \dots, H-1$, where $H \in \mathbb{N}$ is the prediction horizon and $\alpha_{t,h}$ a learning rate schedule. 

\section{Temporal Policy Decomposition}
\label{sec:temporal-policy-decomposition}

In this section, we first define the concept of \gls{efo} and clarify its utility for explainability purposes. Then, we devise an \gls{xrl} method to explain actions selected by a given black-box policy, such as one learned with a \gls{rl} control algorithm, in terms of \glspl{efo}.

\subsection{\glsentrylongpl{efo}}

Since \gls{rl} involves sequential decision-making, each \gls{rl} action should be explained in terms of the sequence of expected outcomes happening after the action is taken. This type of explanation allows us to quantify and interpret the potential effects of actions across different time steps, thus enabling more explainable and interpretable decision-making. To formalize this idea, we introduce the following definition:
\begin{definition}[\glsentrylong{efo}]
    Given a policy $\pi: \mathcal{S} \to \Delta(\mathcal{A})$, a measurable and bounded function $o: \mathcal{S} \times \mathcal{A} \times \mathcal{S} \to [-1,1]$, an initial state $s \in \mathcal{S}$ and a starting action $a \in \mathcal{A}$, the \gls{efo} after $h$ time steps under the policy $\pi$ is:
    \begin{align}
        O_h^{\pi}(s,a) = \mathbb{E} \left[ o(s_h,a_h,s_{h+1}) \middle| s_0=s,a_0=a,\pi \right]
    \end{align}
    where $a_i \sim \pi(\cdot|s_i)$ for $i=1,\dots,h-1$.
\end{definition}
A state-action \gls{gvf} can be expressed as:
\begin{align}
    Q^{\pi}_o(s,a) = \sum_{h=0}^{\infty} \gamma^h O^{\pi}_h(s,a)
\end{align}
Intuitively, the \glspl{efo} allow for a temporal decomposition of \glspl{gvf} under the policy. In the case of a state-action value function $Q^{\pi}$, this approach effectively restores temporal information about \textit{when} rewards are expected to be collected that is aggregated and lost in \gls{rl} algorithms. For this reason, we call our method \glsentryfull{tpd}.

\begin{figure}
    \centering
    \includegraphics[trim={15 15 15 10},clip,width=\textwidth]{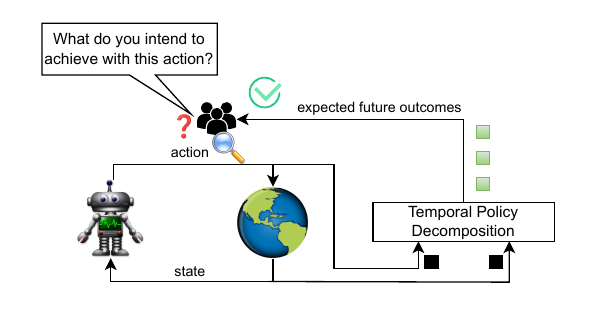}
    \caption{Conceptual overview of the method. Given the sequential interaction between agent and environment, human observers need to interpret each selected action in terms of its expected impact on the future trajectory. Thus, \gls{tpd} explains the action by generating the \glspl{efo} for a long-enough prediction horizon.}
    \label{fig:high-level-method}
\end{figure}

We propose to generate explanations that include \glspl{efo} up to a finite prediction horizon, as dealing with an infinite number of \glspl{efo} is infeasible in practice. Combining the concepts of \gls{gvf} \citep{sutton-2011,lin-2021} and fixed-horizon value function \citep{deasis-2020}, we have the following definition:
\begin{definition}[\glsentrylong{fhgvf}]
    Given a policy $\pi: \mathcal{S} \to \Delta(\mathcal{A})$, a prediction horizon $H \in \mathbb{N}$, and a measurable and bounded function $o: \mathcal{S} \times \mathcal{A} \times \mathcal{S} \to [-1,1]$, the \gls{fhgvf} under the policy $\pi$ is:
    \begin{align}
        Q_{o,H}^{\pi}(s,a) = \mathbb{E} \left[ \sum_{h=0}^{H-1} \gamma^h o_h \middle| s_0=s, a_0=a, \pi \right]
    \end{align}
    where $o_h = o(s_h,a_h,o_{h+1})$ and $a_h \sim \pi(\cdot|s_h)$ for $h = 1, \dots, H-1$.
\end{definition}
The temporal decomposition with a finite prediction horizon $H \in \mathbb{N}$ can thus be expressed as:
\begin{align}
    Q^{\pi}_o(s,a) = Q^{\pi}_{o,H}(s,a) + c = \sum_{h=0}^{H-1} \gamma^h O_h^{\pi}(s,a) + c
\end{align}
where $|c| \le \frac{\gamma^H}{1-\gamma}$ represents the value part neglected by the explanation. With a sufficiently large prediction horizon, this approach can reveal key insights into the expected future trajectory under the policy $\pi$, starting in state $s \in \mathcal{S}$ and taking action $a \in \mathcal{A}$. A conceptual overview of the method is provided in \cref{fig:high-level-method}.

The concept of \gls{efo} provides a structured approach to capturing relevant information at a desired level of abstraction, avoiding the complexity of low-level features typically used in deep \gls{rl}. By defining outcome functions that represent quantities of interest, users can focus on the essential, interpretable aspects of the \gls{mdp} and gain insights into the Markov chain induced by steering the \gls{mdp} with the policy $\pi$. Specifically, given a set of interesting outcome functions $[o_k]_{k=1}^K$, for a state $s \in \mathcal{S}$ and an action $a \in \mathcal{A}$, the explanation consists of the sequence of \glspl{efo} $[O^{\pi}_{h,k}(s,a)]_{h=0,k=1}^{H-1,K}$. 

Furthermore, \glspl{efo} can also provide \textit{contrastive} explanations by comparing the \glspl{efo} of different state-action pairs. For example, given the selected action $a_1 \in \mathcal{A}$ (fact) and a different action $a_2 \in \mathcal{A}$ (foil), the comparison of the respective \glspl{efo} $[O^{\pi}_h(s,a_i)]_{h=0}^H$, $i \in \{1,2\}$ (e.g., through the difference) answers the contrastive question, “Why did you choose action $a_1$ over action $a_2$?”. Such contrastive explanations are often more intuitive and valuable for human users \citep{miller-2019}. Notably, the ability to generate contrastive explanations is the key motivation for defining \glspl{efo} as functions of both state and action rather than only state.

\subsection{Relevant Classes of \glsentrylongpl{efo}}

In the following subsection, we discuss two particularly interesting types of outcomes, namely rewards and events.

\subsubsection{Expected Future Rewards}

Rewards are natural choices for outcomes, as \gls{rl} control algorithms are designed to maximize expected discounted cumulative rewards. Thus, a simple and effective outcome function is the reward function itself, $r(s,a,s^{\prime})$, which allows for a temporal decomposition of the state-action value function $Q^{\pi}$. This decomposition offers insights into when rewards are collected under a given policy $\pi$.

Inspired by \citet{juozapaitis-2019}, when rewards are composed of multiple components, each representing a distinct aspect, we can gain an even deeper understanding. Let $r(s,a,s^{\prime}) = g(\mathbf{r}(s,a,s^{\prime}))$ be the reward function, with $g: \mathbb{R}^K \to \mathbb{R}$ linear and $\mathbf{r}(s,a,s^{\prime}) = [r_k(s,a,s^{\prime})]_{k=1}^K$ a vector of reward components. Let $R^{\pi}_{h,k}(s,a)$ be the \gls{efo} corresponding to $r_k$, for $k=1, \dots, K$. The state-action value function in \cref{eq:q-function} can then be written as:
\begin{align}
Q^{\pi}(s,a) 
&= \sum_{h=0}^{\infty} \mathbb{E}\left[\gamma^h g(\mathbf{r}(s_h,a_h,s_{h+1})) \middle| s_0=s,a_0=a,\pi \right] \nonumber \\
&= g\left( \sum_{h=0}^{\infty} \gamma^h \mathbb{E}\left[ \mathbf{r}(s_h,a_h,s_{h+1}) \middle| s_0=s,a_0=a,\pi \right] \right) \nonumber \\
&= g\left( \sum_{h=0}^{\infty} \gamma^h [R^{\pi}_{h,k}(s,a)]_{k=1}^K \right)
\end{align}
where we have used the linearity of $g$ to swap it with expectation and sum. This formulation extends the reward decomposition method proposed by \citet{juozapaitis-2019}. In addition to decomposing the value into distinct value components mirroring the reward structure, this approach also introduces a temporal aspect, allowing for a more granular understanding of how reward components are accumulated over time under a given policy.

\subsubsection{Probabilities of Future Events}
\label{sec:probabilities-of-future-events}
A particularly interesting type of outcome is an event. Let $\mathcal{X} = \mathcal{S} \times \mathcal{A} \times \mathcal{S}$ be the set of all possible transitions $(s,a,s^{\prime})$ and let $\mathcal{E} \subseteq \mathcal{X}$ be the set of transitions where the event occurs. We can then define the event as an outcome function $e$ using an indicator function:
\begin{align}
    e(s,a,s^{\prime}) = \mathbf{1}_{\mathcal{E}}(s,a,s^{\prime})
\end{align}
and the corresponding \gls{efo} $E^{\pi}_h$ becomes:
\begin{align}
    E^{\pi}_h(s,a) &= \mathbb{P}\left[(s_h,a_h,s_{h+1})\in \mathcal{E} \middle| s_0=s, a_0=a, \pi\right]
\end{align}
Intuitively, when the outcome is an event $e(s,a,s^{\prime})$, the corresponding \gls{efo} $E^{\pi}_h(s,a)$ is the probability that in $h$ time steps the event will occur when starting from state $s \in \mathcal{S}$ and taking action $a \in \mathcal{A}$. We argue that events and probabilities of future events are particularly interpretable for human users.

Events can also be used to decompose the state-action value function $Q^{\pi}$ not only over the time dimension but also over the state-action space, allowing a variant of the reward decomposition discussed in the previous section. Let $x = (s,a,s^{\prime}) \in \mathcal{X}$ be a transition. Consider a complete set of events $\{ e_1,\dots,e_K \}$ (i.e., exhaustive and mutually exclusive), with corresponding subsets $\{ \mathcal{E}_k : \mathcal{E}_k \subseteq \mathcal{X}, k = 1, \dots, K \}$, and such that each event is associated to a deterministic reward:
\begin{gather}
    \label{eq:exhaustive-event-set-assumption}
    \bigcup_{k=1}^K \mathcal{E}_k = \mathcal{X} \\
    \label{eq:mutually-exclusive-events-assumption}
    \mathcal{E}_i \cap \mathcal{E}_j = \emptyset, \; \forall i \neq j, 1 \le i,j \le K \\
    \label{eq:event-reward-map-assumption}
    r(x_i) = r(x_j), \; \forall x_i,x_j \in \mathcal{E}_k, \; k=1,\dots,K
\end{gather}
Let $R^{\pi}_h$ and $E^{\pi}_{h,k}$ be the \glspl{efo} corresponding to the reward $r$ and the events $e_k$, for $k=1,\dots,K$, respectively. The following relationship holds:
\begin{align}
    R^{\pi}_h(s,a) = \sum_{k=1}^K r_k E_{h,k}^\pi(s,a)
\end{align}
and the state-action value function in \cref{eq:q-function} can be re-written as:
\begin{align}
    Q^\pi(s,a)
    &= \sum_{h=0}^{\infty} \gamma^h R^{\pi}_h(s,a) \nonumber \\
    &= \sum_{k=1}^K r_k \sum_{h=0}^{\infty} \gamma^h E^{\pi}_{h,k}(s,a) \nonumber \\
    &= \sum_{k=1}^K r_k \mu_k^\pi(s)
\end{align}
where $\mu_k^\pi(s) = \sum_{t=0}^\infty \gamma^{t} E_{t,k}^\pi(s,a)$ is the unnormalized discounted occupancy over the event $\mathcal{E}_k$ under $\pi$. An advantage of this value decomposition using a complete set of events is that, once the probability of future events for the events are learned, it is possible to reconstruct also the expected future rewards as well as their composition across the state-action space. 

Note that, by defining reward components as $r_k(s,a,s^{\prime}) = r(s,a,s^{\prime}) \mathbf{1}_{\mathcal{E}_k}(s,a,s^{\prime})$ and the reward as $r(s,a,s^{\prime}) = \sum_{k=1}^K r_k(s,a,s^{\prime})$, we find that the reward decomposition across the state-action space is a special case of the previous section. Nevertheless, since we suggest that probabilities of future events are particularly easy to interpret for human users, learning \glspl{efo} from events and reconstructing expected future rewards from them is valuable.

\subsection{Learning \glsentrylongpl{efo}}
\label{sec:learning-efos}

Generating contrastive explanations requires learning to predict \glspl{efo} $[O^{\pi}_h(s,a)]_{h=0}^{H-1}$ for all actions $a \in \mathcal{A}$, not just for the actions selected by the target policy $\pi$, represented as $[O^{\pi}_h(s,a)]_{h=0}^{H-1}$. This requirement raises the need for an off-policy algorithm capable of utilizing data collected under an explorative behavioral policy different from the target policy that needs to be explained. 

We propose a method that meets the off-policy requirement. For a given outcome $o$, a policy $\pi$, and a prediction horizon $H$, we can write the following recursive relationship:
\begin{align}
    Q_{o,h}^{\pi}(s,a) &= Q_{o,h-1}^{\pi}(s,a) + \gamma^h O^{\pi}_h(s,a)
\end{align}
with the initialization $Q_{o,-1}^{\pi}(s,a) = 0$ for all state-action pairs. This formulation allows the problem of learning the \glspl{efo} $[O_h^{\pi}(s,a)]_{h=0}^{H-1}$ to be reframed as the task of learning \glspl{fhgvf} $[Q_{o,h}(s,a)]_{h=0}^{H-1}$. Once the \glspl{fhgvf} are learned, we can solve the following set of linear equations to find the \glspl{efo}:
\begin{align}
    \label{eq:fhgvf-decomposition}
    \begin{bmatrix}
    1 & 0 & 0 & \cdots & 0 \\
    1 & \gamma & 0 & \cdots & 0 \\
    1 & \gamma & \gamma^2 & \cdots & 0 \\
    \vdots & \vdots & \vdots & \ddots & \vdots \\
    1 & \gamma & \gamma^2 & \cdots & \gamma^{H-1}
    \end{bmatrix}
    \begin{bmatrix}
        O_0(s,a) \\
        O_1(s,a) \\
        O_2(s,a) \\
        \vdots \\
        O_{H-1}(s,a)
    \end{bmatrix}
    =
    \begin{bmatrix}
        Q_{o,0}^{\pi}(s,a) \\
        Q_{o,1}^{\pi}(s,a) \\
        Q_{o,2}^{\pi}(s,a) \\
        \vdots \\
        Q_{o,H-1}^{\pi}(s,a)
    \end{bmatrix}
\end{align}
where $\gamma \in (0,1]$ is the discount factor used while learning $Q^{\pi}_{o,h}$. We refer to solving this set of linear equations as \gls{fhgvf} decomposition.

For learning \glspl{fhgvf}, we utilize \gls{fhtd} learning, devised by \citet{deasis-2020} and summarized in \cref{sec:background}, under the assumption that the $Q$-value is linearly realizable. Although \gls{fhtd} methods were proposed as a solution to the deadly triad problem \citep{sutton-2018}, we leverage them for explainability purposes as a step of the \gls{tpd} method. Additionally, the convergence result from \citep{deasis-2020} depends on an assumption on the discount factor $\gamma$ and the features of the true $Q$-value (see \citet[Eq. 20]{deasis-2020}) that may be hard to verify in practice. In the following theorem, we overcome this limitation and provide a simple version of \gls{fhtd} learning for tabular \glspl{mdp} that does not require any assumption on $\gamma$, nor requires multiple time-scales for learning the different $Q$-values $Q_1^\pi, Q_2^\pi, \dots, Q_H^\pi$.

\begin{theorem}
    \label{thm:convergence}
    Assume that
    \begin{enumerate*}[label=(\roman*)]
     \item the behavioral policies  $(\beta_t)_t, \beta_t:{\cal S}\to \Delta({\cal A})$, with $a_t \sim \beta_t(\cdot|s_t)$, ensure that every state-action pair $(s,a)$ is visited infinitely often and
     \item for every $h$ the  sequences $\alpha_{t,h}(s,a)$ are positive, non-increasing in $t$, satisfying $\sum_{n=1}^\infty \alpha_{n,h}(s,a)=\infty$, $\sum_{n=1}^\infty \alpha_{n,h}(s,a)^2<\infty$.
    \end{enumerate*}
    Then, under these assumptions and using the update in \cref{eq:fhtd-learning}, we have that $\lim_{t\to\infty} \hat{Q}_h^{(t+1)}(s,a)= Q_h^\pi(s,a)$ almost surely for every $(s,a)$.
\end{theorem}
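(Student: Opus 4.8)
The plan is to proceed by induction on the horizon index $h$, exploiting the fact that the update in \cref{eq:fhtd-learning} for $\hat Q_h$ bootstraps only on the lower level $\hat Q_{h-1}$ and never on $\hat Q_h$ itself. This cascade structure is what lets us sidestep both a contraction-in-$\gamma$ argument and the multiple-timescale construction of \citet{deasis-2020}: once level $h-1$ has converged, the level-$h$ recursion is an ordinary stochastic-approximation scheme for estimating a conditional expectation, with no self-referential bootstrap that would need to contract. Concretely, I would first record the fixed-horizon Bellman identity $Q_h^\pi(s,a)=\mathbb{E}_{s'\sim p(\cdot|s,a),\,a'\sim\pi(\cdot|s')}[r(s,a,s')+\gamma Q_{h-1}^\pi(s',a')]$, valid for $h\ge 0$ with the convention $Q_{-1}^\pi\equiv 0$, which the true fixed-horizon values satisfy and which is exactly the target the recursion tracks.

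Next I would introduce the per-level error $\Delta_h^{(t)}(s,a)=\hat Q_h^{(t)}(s,a)-Q_h^\pi(s,a)$ and rewrite the asynchronous update, active only at the visited pair $(s_t,a_t)$, in the standard form $\Delta_h^{(t+1)}(s,a)=(1-\alpha_{t,h}(s,a))\Delta_h^{(t)}(s,a)+\alpha_{t,h}(s,a)F_h^{(t)}(s,a)$ with $F_h^{(t)}(s,a)=r_t+\gamma\hat Q_{h-1}^{(t)}(s_{t+1},a_{t+1})-Q_h^\pi(s,a)$. Since the bootstrap action $a_{t+1}\sim\pi$ is drawn from the target policy, the conditional expectation matches the policy-$\pi$ Bellman operator and the off-policy behavioral sampling enters only through which pair is updated. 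Using the Bellman identity, the conditional mean of the increment collapses to $\mathbb{E}[F_h^{(t)}(s,a)\mid\mathcal F_t]=\gamma\,\mathbb{E}_{s',a'}[\Delta_{h-1}^{(t)}(s',a')]$, so $F_h^{(t)}$ splits into a mean-zero martingale-difference noise and a bias term bounded by $\gamma\|\Delta_{h-1}^{(t)}\|_\infty$. The crucial point is that this bias does not involve $\Delta_h^{(t)}$ at all: the effective self-contraction coefficient is zero, and the only drift comes from the already-controlled lower level. I would also verify the two routine ingredients: boundedness of the iterates (since $r_t\in[-1,1]$, $\gamma\le 1$, and each $\hat Q_h^{(t)}$ is a convex combination of bounded targets, $|\hat Q_h^{(t)}|$ stays uniformly bounded, hence $F_h^{(t)}$ has bounded conditional variance), and that the induction hypothesis $\|\Delta_{h-1}^{(t)}\|_\infty\to 0$ almost surely forces the bias term to vanish.

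With these pieces I would close the induction by invoking a standard stochastic-approximation lemma for asynchronous Robbins--Monro recursions, of the type used in tabular convergence proofs \citep{sutton-2018}: if $0\le\alpha_{t,h}\le 1$ satisfies $\sum_t\alpha_{t,h}=\infty$ and $\sum_t\alpha_{t,h}^2<\infty$ along the visits to each pair, the noise is conditionally mean-zero with bounded variance, and the conditional bias is bounded by $\kappa\|\Delta_h^{(t)}\|_\infty+c_t$ with $\kappa\in[0,1)$ and $c_t\to 0$ almost surely, then $\Delta_h^{(t)}\to 0$ almost surely. Here it applies with $\kappa=0$ and $c_t=\gamma\|\Delta_{h-1}^{(t)}\|_\infty$. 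The infinite-visitation assumption guarantees that the per-pair effective step sizes (which vanish whenever $(s,a)$ is not the visited pair) still satisfy the summability conditions along the subsequence of visits, so the lemma applies state-action-wise; the base case $h=-1$ is trivial because $\hat Q_{-1}\equiv 0=Q_{-1}^\pi$.

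The main obstacle is the inductive step's treatment of the asymptotically vanishing bias: the target $r_t+\gamma\hat Q_{h-1}^{(t)}(s_{t+1},a_{t+1})$ is biased at every finite $t$ because it bootstraps on the not-yet-converged estimate $\hat Q_{h-1}^{(t)}$, and I must argue that this time-varying perturbation decays fast enough to be absorbed by the stochastic-approximation lemma rather than derailing the averaging. The clean resolution is precisely the cascade observation above: because the bias feeds \emph{forward} from level $h-1$ and never couples back into level $h$, no contraction in $\gamma$ is required, which is exactly where our assumptions become weaker than those of \citet{deasis-2020}. The remaining care lies only in verifying the summability of the asynchronous step sizes along visitation subsequences and the uniform boundedness underlying the variance bound.
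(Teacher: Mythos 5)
Your proposal is correct and follows essentially the same route as the paper's own proof: induction on $h$ exploiting the cascade structure (level $h$ bootstraps only on level $h-1$, so there is no self-referential contraction to control), a decomposition of the update into a martingale-difference noise plus a perturbation bounded by the level-$(h-1)$ error that vanishes almost surely by the induction hypothesis, and an appeal to standard asynchronous stochastic-approximation results under the Robbins--Monro step-size conditions and infinite visitation. The only differences are cosmetic (you anchor the induction at the trivial $h=-1$ convention rather than at $h=1$, and you phrase the vanishing perturbation via $\gamma\lVert\Delta_{h-1}^{(t)}\rVert_\infty$ where the paper writes $E_{n+1}=o(1)$).
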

\begin{proof}
    See \cref{apx:theorem}.
\end{proof}

\begin{figure}[t]
    \centering
    \begin{subfigure}{0.4\textwidth}
        \centering
        \includegraphics[width=\textwidth]{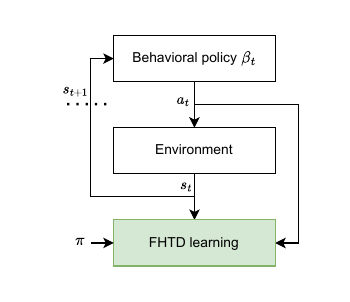}
        \caption{Training}
        \label{fig:block-diagram-training}
    \end{subfigure}
    \begin{subfigure}{0.4\textwidth}
        \centering
        \raisebox{-5mm}[\height][\depth]{%
            \hspace{0.1\textwidth}%
            \includegraphics[width=\textwidth]{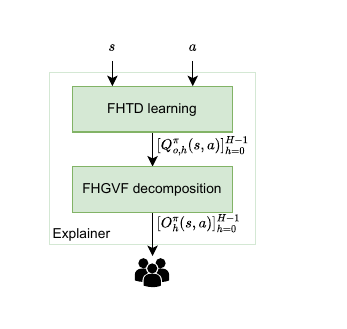}%
        }
        \caption{Inference}
        \label{fig:block-diagram-inference}
    \end{subfigure}
    \caption{Schematic representation of the training \subref{fig:block-diagram-training} and inference \subref{fig:block-diagram-inference} phases in \gls{tpd}. During training, the \glspl{fhgvf} $Q^{\pi}_{o,h}$ for $h=0,1,\dots,H-1$ using \gls{fhtd} learning \cite{}. During inference, given a state $s \in \mathcal{S}$ and an action $a \in \mathcal{A}$ to explain, the \gls{fhgvf} values $[Q^{\pi}_{o,h}(s,a)]_{h=0}^{H-1}$ are predicted and then decomposed into \glspl{efo} $[O^{\pi}_h(s,a)]_{h=0}^{H-1}$, which are presented to human users.}
    \label{fig:block-diagram}
\end{figure}

A schematic illustration of the method is provided in \cref{fig:block-diagram}. During training, \gls{fhtd} learning is employed to update estimates of the \glspl{fhgvf} $[Q_{o,h}^{\pi}(s,a)]_{h=0}^{H-1}$ for all state-action pairs, using data gathered by the behavioral policies $(\beta_t)t$, which may vary at each time step $t$. In the inference phase, the method generates an explanation for a state-action pair $(s,a)$ by computing the \glspl{efo} $[O_h^{\pi}(s,a)]_{h=0}^{H-1}$. The inference process involves first predicting the \glspl{fhgvf} $[Q_{o,h}^{\pi}(s,a)]_{h=0}^{H-1}$ and then solving \cref{eq:fhgvf-decomposition}. We refer to the sequence of these two steps as the \textit{explainer}. In an online learning setting, which this method supports, training and inference are interleaved, allowing the \gls{fhgvf} estimates to be updated continuously after each inference step.

It is worth highlighting that directly learning \glspl{efo}, instead of learning the \glspl{fhgvf}, cannot be done in an off-policy setting. A supervised learning approach or a direct stochastic approximation algorithm would be on-policy, requiring trajectories $(s_t,a_t,\dots,s_{t+h+1})$ with $a_i \sim \pi(\cdot|s_i)$ for $i = t+1, \dots, t+h$ (i.e., collected under the policy $\pi$ itself). Relying on on-policy data is problematic, as the policy $\pi$ might not be exploratory enough, leading to biased or inaccurate estimates. This issue is common in practice, as off-policy \gls{rl} control algorithms often learn a greedy policy (e.g., Q-learning). These considerations motivate our off-policy algorithm learning \glspl{fhgvf} instead of directly estimating \glspl{efo}.

\section{Experiments and Results}
\label{sec:experiments}

In this section, we describe the simulation-based experiments conducted to evaluate our method. We begin with an overview of the experimental setup, detailing the \gls{rl} environment and the training of both the policy and the explainers. After that, we present qualitative examples of explanations and provide a quantitative assessment of prediction errors compared to ground truth values.

\subsection{Experimental Setup}

We adapted the Taxi environment from Gymnasium \citep{towers-2024} by incorporating fuel consumption and traffic dynamics. Visual examples of the modified environment states are shown in \cref{fig:explanation-1-pair-1-env,fig:explanation-1-pair-2-env,fig:explanation-2-env}. The environment is formulated as the following \gls{mdp}:
\begin{itemize}
    \item \textit{State Space}: Each state is represented as an integer that encodes the taxi's position (row and column), its fuel level, and whether the passenger is in the taxi or not. To limit the state space dimension, the destination, gas station, and initial passenger location are fixed to the same corner in all episodes.
    \item \textit{Action Space}: The taxi can perform the following actions: move in four directions (south, north, east, west), pick up the passenger, drop off the passenger, and refuel at the gas station.
    \item \textit{Reward Function}: The taxi earns a reward of $+20$ for successfully dropping off the passenger at the destination and $+10$ for picking up the passenger. A penalty of $-100$ is incurred for running out of fuel or performing invalid actions, while a small penalty of $-1$ is applied for each movement action.
    \item \textit{Transition Dynamics}: Each movement action decreases the fuel level by 1 while refueling increases it by 2. The fuel level ranges from $0$ to $20$. Movement actions have a $0.1$ probability of failing, which leaves the taxi's position unchanged but still decreases the fuel level, simulating the taxi stuck in traffic. Invalid actions, such as refueling when the taxi is not at the gas station, leave the state unchanged. The episode ends with either success (when the passenger is dropped off at the destination) or failure (when the taxi runs out of fuel).
\end{itemize}
The addition of fuel consumption and traffic dynamics introduces stochasticity to the environment, making the interpretation of the taxi's behavior under a policy more challenging. Even with an optimal policy, uncertainty due to traffic conditions makes it difficult for a human observer to anticipate the policy's behavior. This unpredictability is further amplified when the policy is suboptimal, underscoring the need for explanations.

In this environment, we trained a near-optimal policy using Q-learning with action masking \citep{watkins-1992,sutton-2018}. Based on this policy, we then trained an explainer for each of the following events: \textit{dropoff}, \textit{pickup}, \textit{refuel}, \textit{failure}, \textit{traffic}, and \textit{move}. These explainers are designed to predict the probabilities of future events, as described in \cref{sec:probabilities-of-future-events}. The probability of the \textit{terminated} event is derived by exclusion without requiring an additional explainer. Since this set of events is exhaustive, and each event corresponds to a deterministic reward, the assumptions in \crefrange{eq:exhaustive-event-set-assumption}{eq:event-reward-map-assumption} are satisfied. Consequently, the expected future reward composition can be reconstructed from the predicted event probabilities, as outlined in \cref{sec:probabilities-of-future-events}. Detailed information on the training procedure can be found in \cref{apx:training-details}.

\subsection{Explainability Results}

\begin{figure}[t]
    \centering
    \begin{subfigure}{0.48\textwidth}
        \centering
        \includegraphics[width=0.9\textwidth]{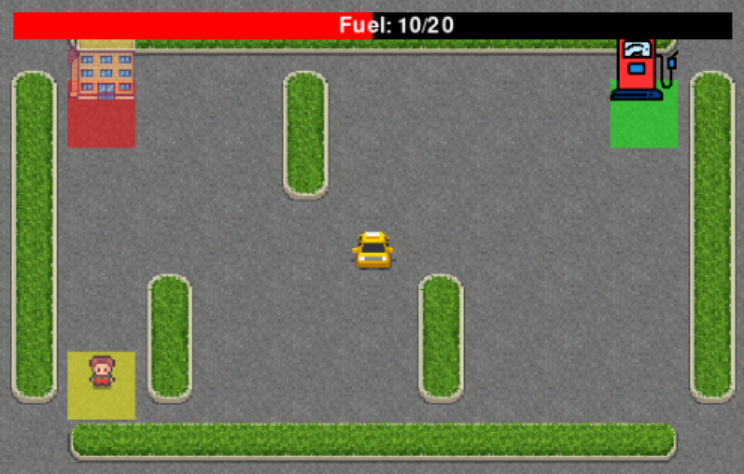}
        \caption{Environment state (fuel 10)}
        \label{fig:explanation-1-pair-1-env}
    \end{subfigure}
    \hfill
    \begin{subfigure}{0.48\textwidth}
        \centering
        \includegraphics[width=0.9\textwidth]{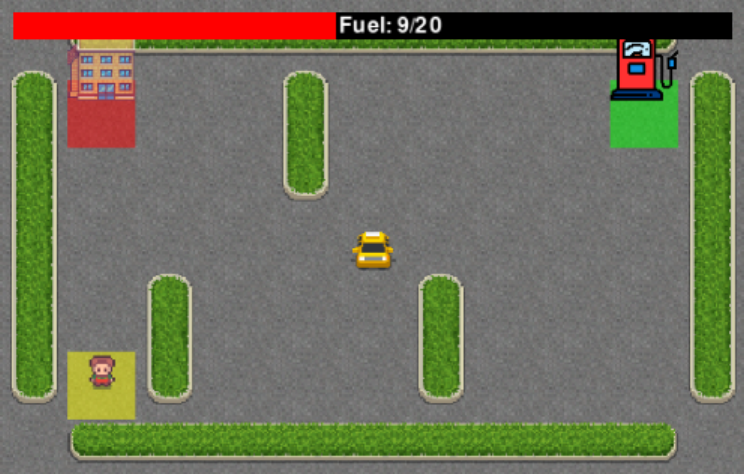}
        \caption{Environment state (fuel 9)}
        \label{fig:explanation-1-pair-2-env}
    \end{subfigure}

    \vspace{0.1cm}
    
    \begin{subfigure}{0.48\textwidth}
        \centering
        \includegraphics[trim={15 15 15 10},clip,width=0.9\textwidth]{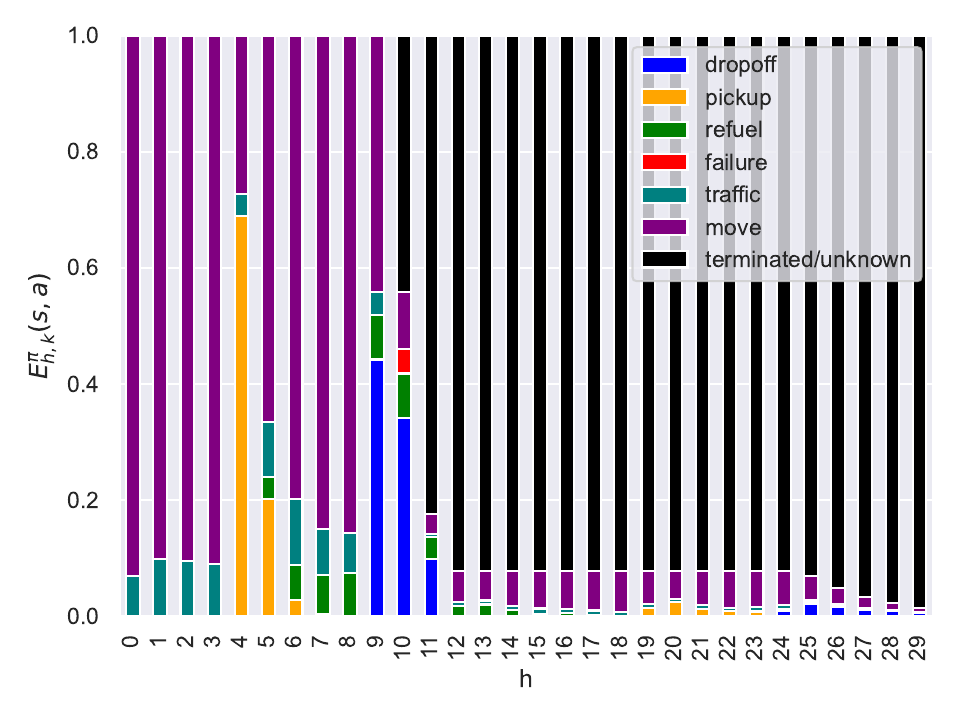}
        \caption{Probabilities of future events (fuel 10)}
        \label{fig:explanation-1-pair-1-event}
    \end{subfigure}
    \hfill
    \begin{subfigure}{0.48\textwidth}
        \centering
        \includegraphics[trim={15 15 15 10},clip,width=0.9\textwidth]{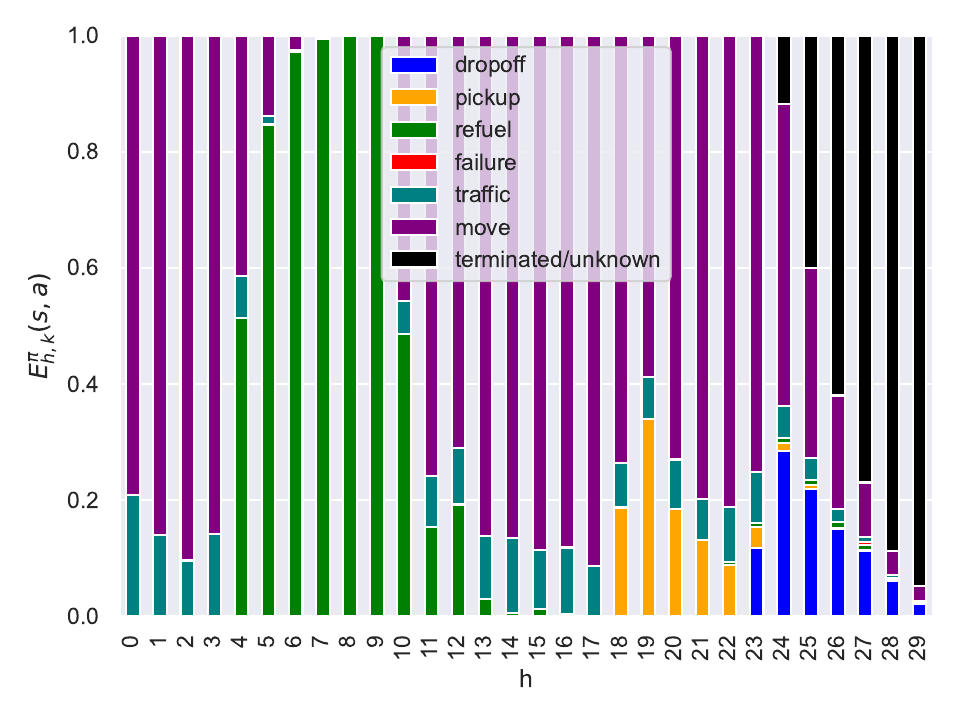}
        \caption{Probabilities of future events (fuel 9)}
        \label{fig:explanation-1-pair-2-event}
    \end{subfigure}

    \vspace{0.1cm}
    
    \begin{subfigure}{0.48\textwidth}
        \centering
        \includegraphics[trim={15 15 15 10},clip,width=0.9\textwidth]{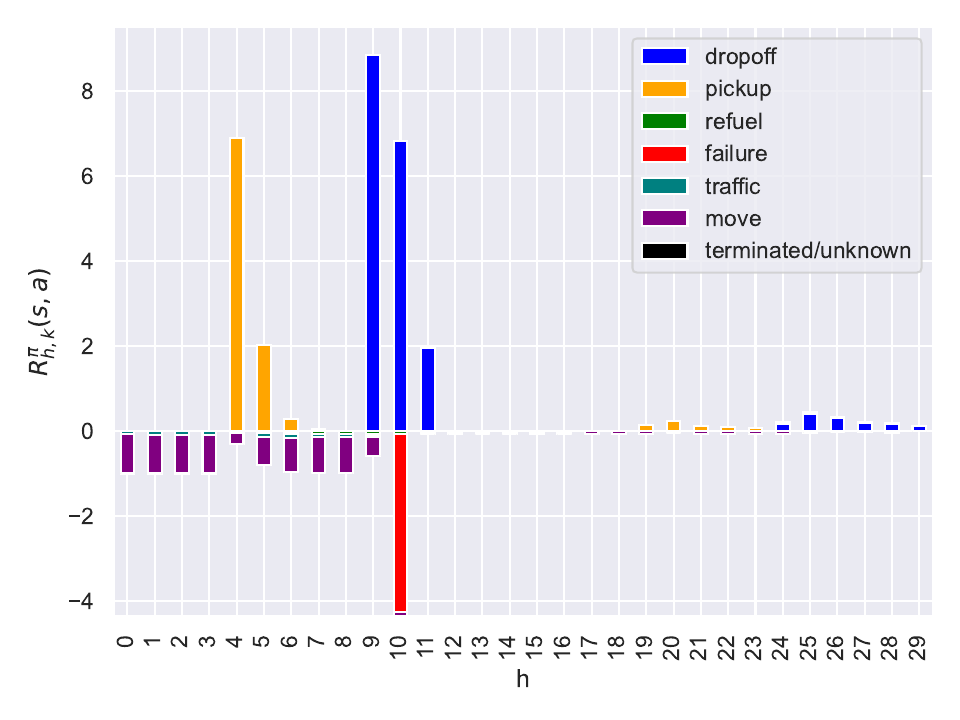}
        \caption{Expected reward components (fuel 10)}
        \label{fig:explanation-1-pair-1-reward-component}
    \end{subfigure}
    \hfill
    \begin{subfigure}{0.48\textwidth}
        \centering
        \includegraphics[trim={15 15 15 10},clip,width=0.9\textwidth]{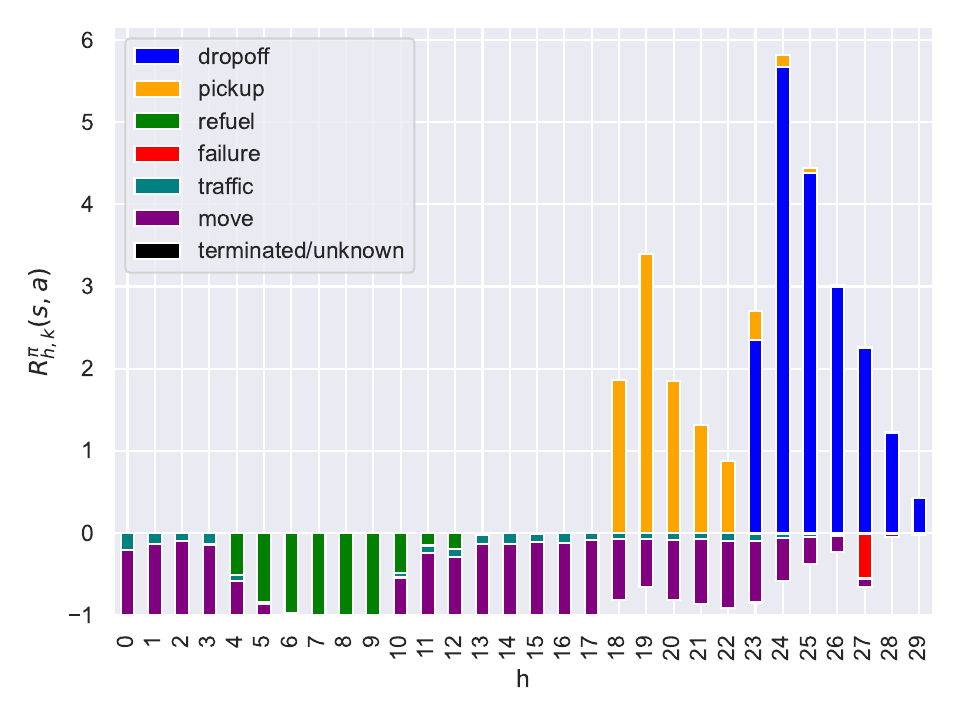}
        \caption{Expected reward components (fuel 9)}
        \label{fig:explanation-1-pair-2-reward-component}
    \end{subfigure}

    \caption{Explanations for two environment states with different initial fuel levels: $10$ \subref{fig:explanation-1-pair-1-env} and $9$ \subref{fig:explanation-1-pair-2-env}. In both scenarios, the action analyzed is the one chosen by the policy. The probabilities of future events \subref{fig:explanation-1-pair-1-event}-\subref{fig:explanation-1-pair-2-event} reveal the policy’s strategy, while the expected future rewards \subref{fig:explanation-1-pair-1-reward-component}-\subref{fig:explanation-1-pair-2-reward-component} demonstrate how the probabilities of events map to the expected rewards.}
    \label{fig:explanation-1}
\end{figure}

\Cref{fig:explanation-1} illustrates explanations for two environment states that differ only in their fuel levels (\cref{fig:explanation-1-pair-1-env} and \cref{fig:explanation-1-pair-2-env}). In both cases, the action under analysis is the one selected by the policy. The probabilities of future events (\cref{fig:explanation-1-pair-1-event} and \cref{fig:explanation-1-pair-2-event}) reveal the main strategy of the policy in both states. With a fuel level of $10$ (\cref{fig:explanation-1-pair-1-event}), the policy aims to complete the task without refueling, as shown by the high probability of picking up the passenger in steps $4$-$5$ and dropping them off in steps $9$-$10$. In contrast, when the initial fuel level is $9$ (\cref{fig:explanation-1-pair-2-event}), the policy prioritizes refueling, indicated by a high probability of refueling within steps $4$-$12$, followed by pickup and dropoff events starting from step $18$. This comparison highlights the fuel threshold the policy has learned, determining whether refueling is necessary or if the task can be completed directly. Additionally, the predictions effectively capture the probability of traffic---a factor unknown to a human observer. This traffic uncertainty explains why refueling remains a possible outcome even with a starting fuel level of $10$. If traffic occurs early in the episode, the policy adapts by heading to the gas station. 

Further insights come from the expected future rewards (\cref{fig:explanation-1-pair-1-reward-component} and \cref{fig:explanation-1-pair-2-reward-component}), which highlight the impact of event probabilities and strategy choices on rewards. With an initial fuel level of $10$, the positive rewards for pickup and dropoff are expected to occur sooner (around steps $4$-$5$ for pickup and $9$-$11$ for dropoff) compared to the $9$ fuel level scenario (around steps $18$-$23$ for pickup and $23$-$29$ for dropoff). On the other hand, the considerations for negative rewards are opposite opposite. In the first case, a potential negative reward at step $10$ reflects a possible failure due to running out of fuel, assuming three traffic events. With a starting fuel level of $9$, the episode involves more frequent but smaller penalties due to movement or refueling (steps $0$-$23$), with a reduced chance of failure at step $27$, as the policy carefully ensures sufficient fuel before leaving the gas station. These insights could be used for fine-tuning the failure penalty and encourage refueling even when starting with a fuel level of $10$. Thus, besides enhancing transparency, the method is also useful for aiding the reward engineering process.

\begin{figure}[t]
    \centering
    \begin{subfigure}{0.48\textwidth}
        \centering
        \includegraphics[width=0.9\textwidth]{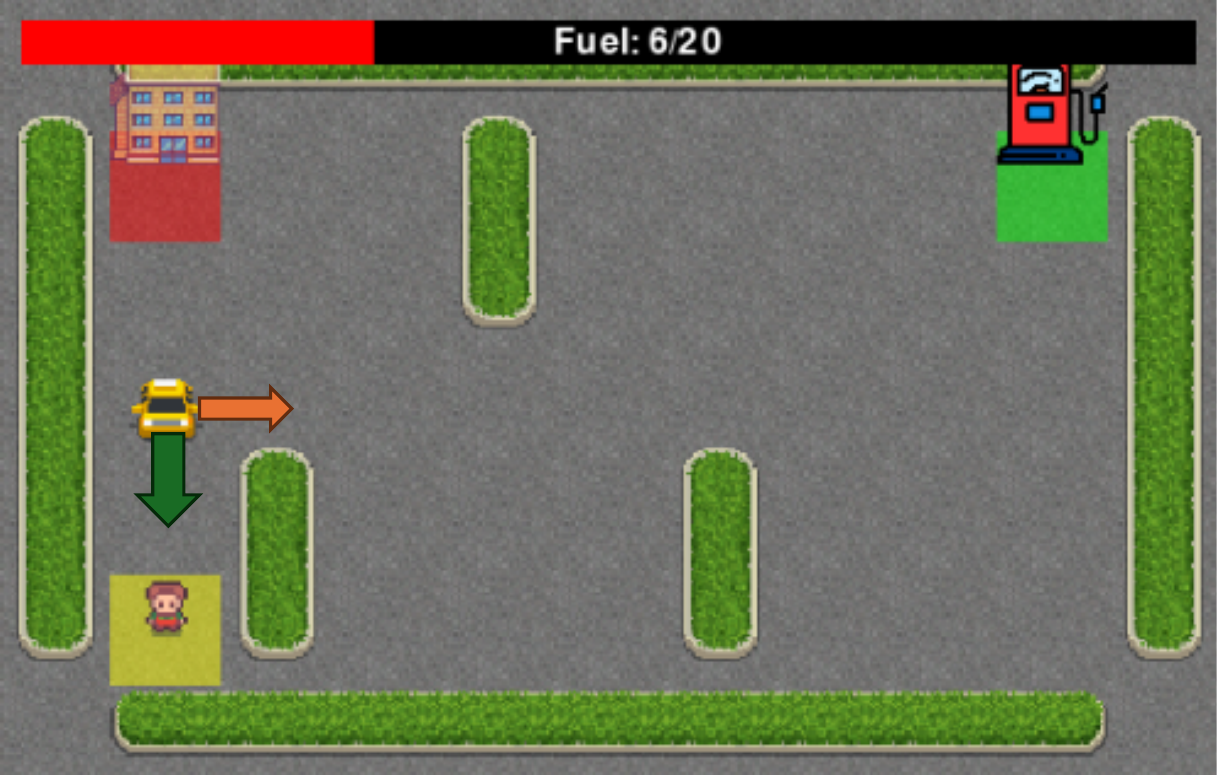}
        \caption{Environment state}
        \label{fig:explanation-2-env}
    \end{subfigure}
    \hfill
    \begin{subfigure}{0.48\textwidth}
        \centering
        \includegraphics[trim={15 15 15 10},clip,width=0.9\textwidth]{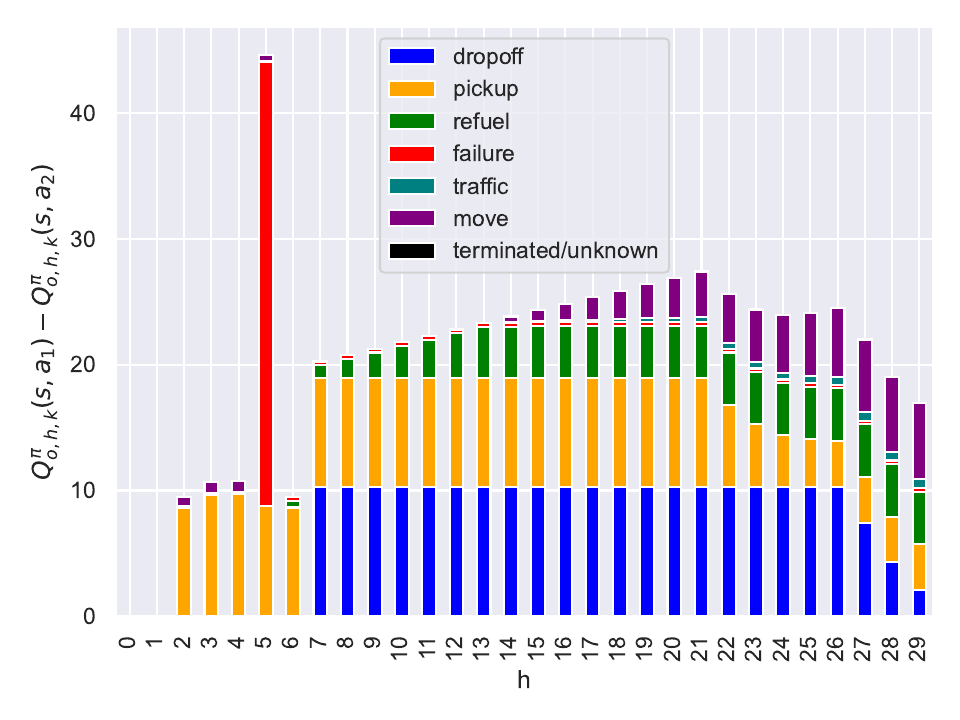}
        \caption{Expected return difference (south - east)}
        \label{fig:explanation-2-contrastive-return-component}
    \end{subfigure}

    \vspace{0.1cm}
    
    \begin{subfigure}{0.48\textwidth}
        \centering
        \includegraphics[trim={15 15 15 10},clip,width=0.9\textwidth]{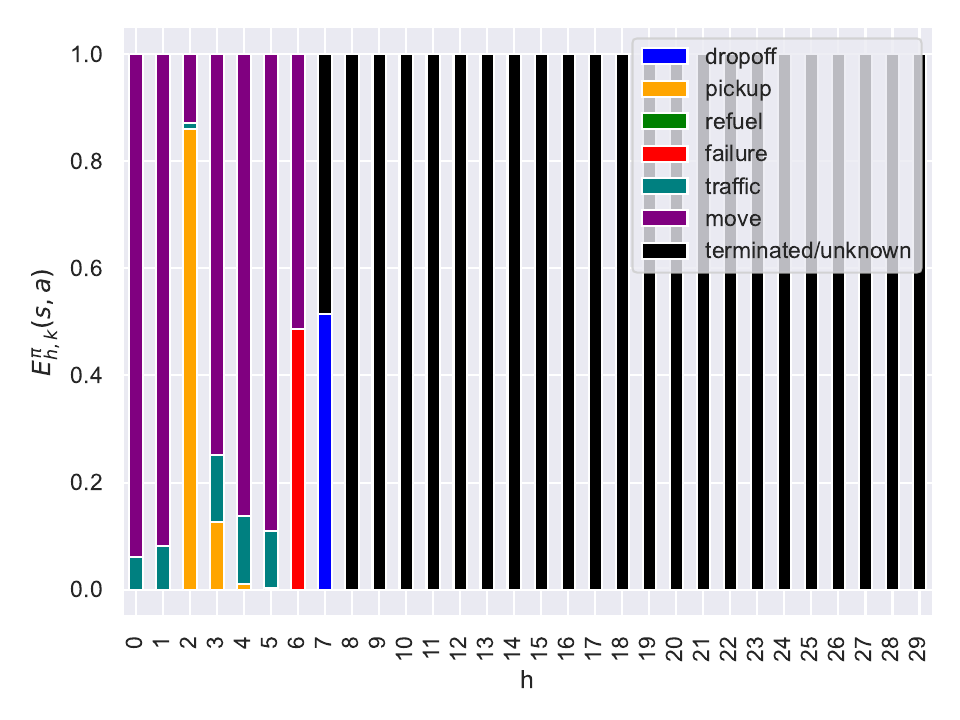}
        \caption{Probabilities of future events (south)}
        \label{fig:explanation-2-pair-1-event}
    \end{subfigure}
    \hfill
    \begin{subfigure}{0.48\textwidth}
        \centering
        \includegraphics[trim={15 15 15 10},clip,width=0.9\textwidth]{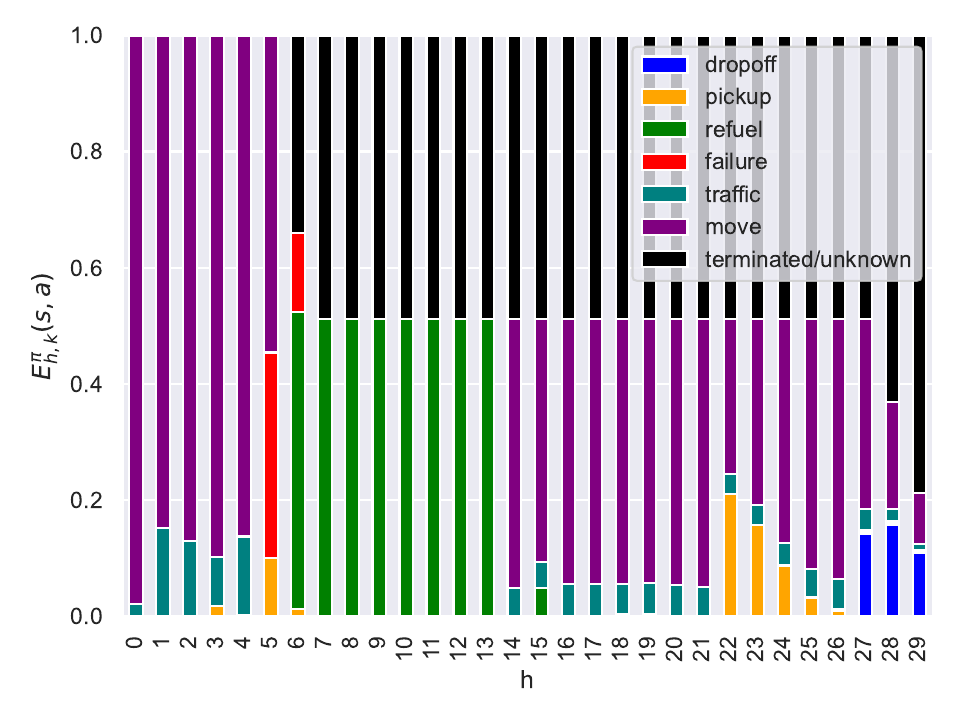}
        \caption{Probabilities of future events (east)}
        \label{fig:explanation-2-pair-2-event}
    \end{subfigure}

    \vspace{0.1cm}
    
    \begin{subfigure}{0.48\textwidth}
        \centering
        \includegraphics[trim={15 15 15 10},clip,width=0.9\textwidth]{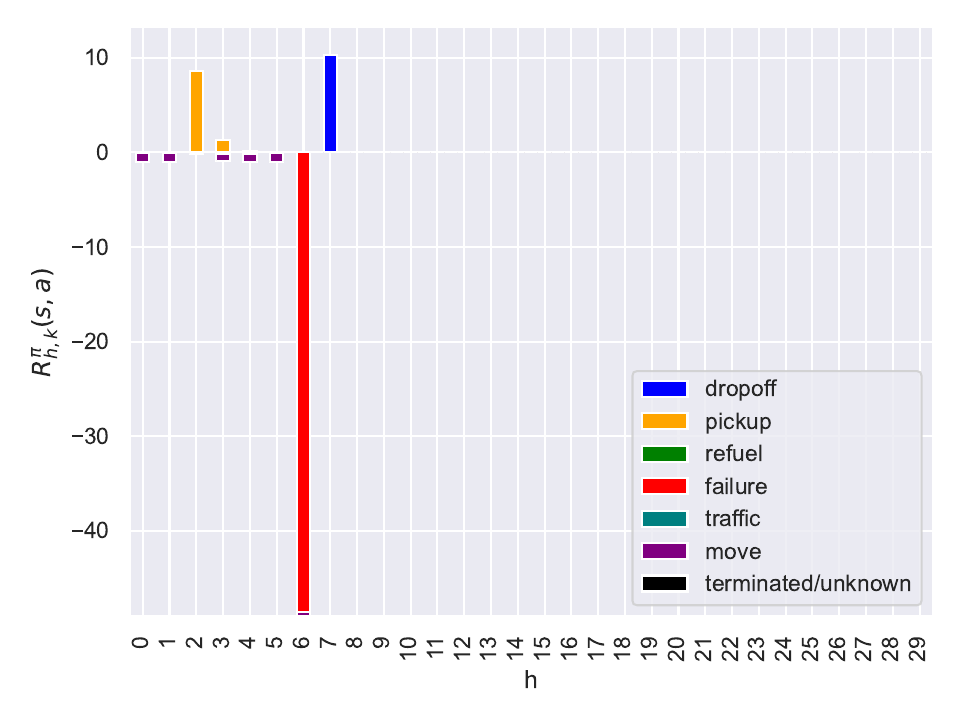}
        \caption{Expected reward components (south)}
        \label{fig:explanation-2-pair-1-reward-component}
    \end{subfigure}
    \hfill
    \begin{subfigure}{0.48\textwidth}
        \centering
        \includegraphics[trim={15 15 15 10},clip,width=0.9\textwidth]{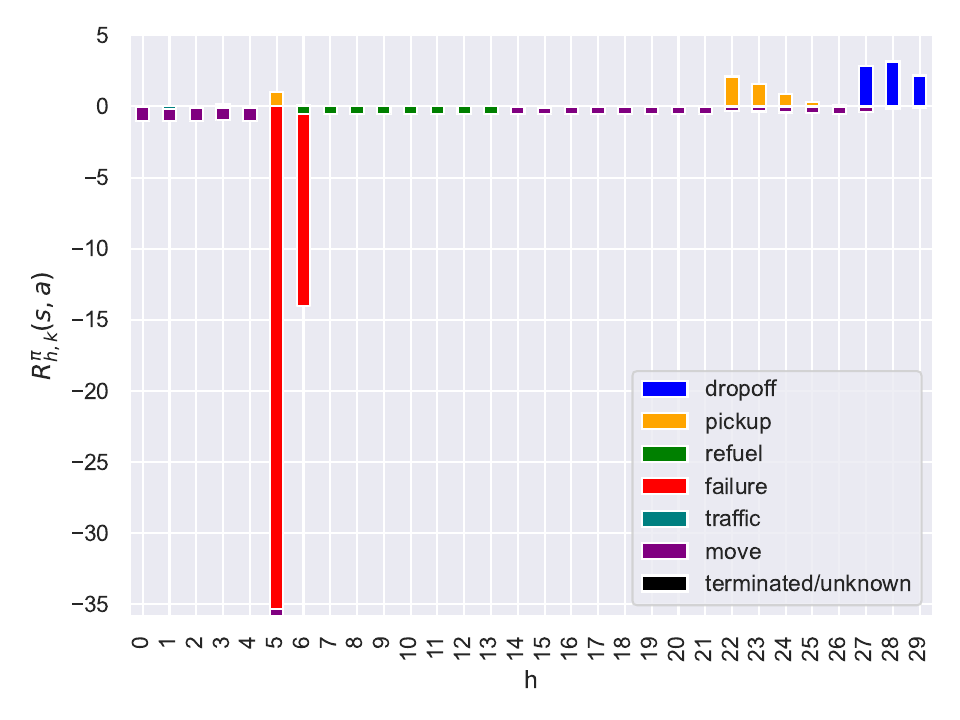}
        \caption{Expected reward components (east)}
        \label{fig:explanation-2-pair-2-reward-component}
    \end{subfigure}

    \caption{Example explanations for an environment state \subref{fig:explanation-2-env} where changing the initial action from moving south (optimal action) to moving east (suboptimal action) significantly impacts the future trajectory. The expected return difference \subref{fig:explanation-2-contrastive-return-component} highlights why moving south is preferable to moving east. The probabilities of future events \subref{fig:explanation-2-pair-1-event}-\subref{fig:explanation-2-pair-2-event} and the corresponding expected future rewards \subref{fig:explanation-2-pair-1-reward-component}-\subref{fig:explanation-2-pair-2-reward-component} illustrate the resulting shift in strategy.}
    \label{fig:explanation-2}
\end{figure}

Another example of explanation is provided in \cref{fig:explanation-2} and examines a state where the choice of the first action significantly affects the strategy and outcome. Here, we compare moving south (optimal action) to moving east (suboptimal action). As before, the probabilities of future events (\cref{fig:explanation-2-pair-1-event} and \cref{fig:explanation-2-pair-2-event}) reveal the taxi’s strategy in each scenario. Moving south initiates a sequence where pickup and dropoff occur quickly (most likely at steps $2$ and $7$), without refueling. However, there remains a relatively high probability (about $0.5$) of running out of fuel at step $6$ due to traffic. Moving east, by contrast, leads the policy to refuel first, as the fuel would be insufficient to complete the task.

To understand why moving south is preferred by the policy, it is helpful to examine the expected future rewards (\cref{fig:explanation-2-pair-1-reward-component} and \cref{fig:explanation-2-pair-2-reward-component}) and their differences (\cref{fig:explanation-2-contrastive-return-component}). The explanations reveal that, starting with moving south, the taxi collects positive reward for pickup and dropoff earlier than in the counterfactual scenario. Both paths involve significant penalties due to potential fuel depletion, but the timing and order of events differ. The expected return difference (\cref{fig:explanation-2-contrastive-return-component}) clarifies the comparison: moving south results in consistently better outcomes starting from step $2$. While the penalty associated with running out of fuel initially favors moving south at step $5$, this advantage disappears by the next step, indicating comparable risks in both paths. However, the costs related to refueling and movement strongly favor moving south, as the eastward trajectory is longer and requires refueling. By step $29$, the difference in expected return is decisively positive for moving south, confirming that it outperforms moving east.

As a note on the terminated event, we observed empirically that prediction errors are primarily underestimations, with no noticeable overestimation. This behavior can be attributed to the initialization of the \gls{fhgvf} values to zero, in conjunction with a cascade effect introduced by the FHTD learning updates across different horizons. When the explainers underestimate the probabilities of certain events, the probability of the terminated event (computed by exclusion) is overestimated. Therefore, we labeled the terminated event as \textit{terminated/unknown} in the figures.

\subsection{Learning Performance}

\begin{table}[t]
    \caption{Errors between \glspl{efo} predicted with \gls{tpd} and ground truth computed with dynamic programming, measured with \gls{mse} and infinity norm $\|\cdot\|_{\infty}$ (i.e., maximum error). The prefixes $\pi$ and $\overline{\pi}$ indicate evaluation on the actions selected and not selected by the policy, respectively, for states encountered in $10^4$ episodes. Means and standard deviations are calculated over $10$ independent runs.}
    \label{tab:errors}
    \centering
    \begin{tabular}{l|c|c|c|c}
        Outcome & $\pi$-\glsentryshort{mse} & $\overline{\pi}$-\glsentryshort{mse} & $\pi$-$\|\cdot\|_{\infty}$ & $\overline{\pi}$-$\|\cdot\|_{\infty}$ \\
        \hline\hline
        Dropoff & $(1.86 \pm 0.19) 10^{-4}$ & $(1.10 \pm 0.03) 10^{-4}$ & $(1.20 \pm 0.22) 10^{-1}$ & $(4.40 \pm 1.21) 10^{-1}$  \\
        \hline
        Pickup & $(1.03 \pm 0.16) 10^{-4}$ & $(9.17 \pm 0.79) 10^{-5}$ & $(1.78 \pm 0.21) 10^{-1}$ & $(2.09 \pm 0.16) 10^{-1}$ \\
        \hline
        Refuel & $(4.48 \pm 0.27) 10^{-5}$ & $(1.60 \pm 0.15) 10^{-4}$ & $(2.28 \pm 0.32) 10^{-1}$ & $(2.05 \pm 0.24) 10^{-1}$ \\
        \hline
        Failure & $(3.75 \pm 0.64) 10^{-6}$ & $(7.39 \pm 6.56) 10^{-6}$ & $(1.68 \pm 0.27) 10^{-1}$ & $(3.36 \pm 0.79) 10^{-1}$ \\
        \hline
        Traffic & $(2.46 \pm 0.07) 10^{-4}$ & $(2.42 \pm 0.15) 10^{-4}$ & $(2.80 \pm 0.20) 10^{-1}$ & $(4.77 \pm 1.62) 10^{-1}$ \\
        \hline
        Move & $(4.91 \pm 0.30) 10^{-4}$ & $(5.86 \pm 0.43) 10^{-4}$ & $(2.80 \pm 0.20) 10^{-1}$ & $(7.77 \pm 0.66) 10^{-1}$ \\
    \end{tabular}
\end{table}

Accurate predictions are crucial for reliable explanations. To quantitatively assess the quality of the explanations, we first computed the exact \glspl{fhgvf} using dynamic programming and derived the corresponding ground-truth \glspl{efo}. We then compared the approximate \glspl{efo} predicted by our method against the ground truth.

For the evaluation, we collected states by running the policy for $10^4$ episodes. For each state encountered, we computed the approximate and exact \glspl{efo} for all possible actions, including both the actions selected by the policy and the other actions. The evaluation metrics are reported separately for on-policy and off-policy actions and denoted by $\pi$ and $\overline{\pi}$, respectively. We repeated the evaluation process for $10$ independent runs. The results, presented in \cref{tab:errors}, demonstrate that the explainers accurately predict the \glspl{efo} for all the actions. This finding suggests that the qualitative explanations generated by our method are reliable and trustworthy.

It is important to note that dynamic programming has two key limitations that hinder its practical adoption and motivate our data-driven method. First, it assumes known transition probabilities, which contradicts the core strength of \gls{rl}---the ability to work with unknown dynamics. Second, it does not generalize or scale well to continuous or high-dimensional \glspl{mdp}. As a result, dynamic programming is impractical outside of controlled, low-dimensional simulated \glspl{mdp}.

\section{Conclusion}
\label{sec:conclusion}

In this paper, we introduced \glsentryfull{tpd}, an \gls{xrl} method designed to clarify \gls{rl} decision-making by explaining actions in terms of their \glsentryfullpl{efo} across time steps. We showed how \glspl{gvf}, including the value functions used for decision-making in \gls{rl} control algorithms, can be decomposed along the temporal dimension to provide insights into when specific outcomes are expected to occur. To achieve this, we leveraged \glsentrylong{fhtd} learning to devise an off-policy approach that efficiently exploits the sequential nature of \glspl{mdp}. Our results demonstrated that \gls{tpd} delivers both accurate and interpretable explanations that clarify the policy strategy for a prediction horizon of interest, building trust among human users. Additionally, we discussed how contrastive explanations in terms of \glspl{efo} can also serve as a tool to guide fine-tuning of the reward function for better alignment with human expectations.

There are several promising directions for future research. First, experimenting with this method in continuous problems, such as radio network optimization, could validate its performance in more realistic and complex settings. Second, integrating uncertainty estimates into predictions---using techniques like conformal prediction to compute confidence intervals---could be beneficial to inform users about the reliability of \gls{efo} predictions. Lastly, investigating approaches to predict the full distribution of future outcomes, rather than focusing solely on expected values, could be particularly beneficial in problems where rewards cannot be easily decomposed into discrete events.

\acks{%
This work was partially supported by the \gls{ssf}.
}

\appendix
\section{Convergence of \glsentryshort{fhtd} Learning}
\label{apx:theorem}

In this appendix, we prove \cref{thm:convergence} presented in \cref{sec:learning-efos}.

\begin{repeattheorem}{thm:convergence}
    Assume that
    \begin{enumerate*}[label=(\roman*)]
     \item the behavioral policies  $(\beta_t)_t, \beta_t:{\cal S}\to \Delta({\cal A})$, with $a_t \sim \beta_t(\cdot|s_t)$, ensure that every state-action pair $(s,a)$ is visited infinitely often and
     \item for every $h$ the  sequences $\alpha_{t,h}(s,a)$ are positive, non-increasing in $t$, satisfying $\sum_{n=1}^\infty \alpha_{n,h}(s,a)=\infty$, $\sum_{n=1}^\infty \alpha_{n,h}(s,a)^2<\infty$.
    \end{enumerate*}
    Then, under these assumptions and using the update in \cref{eq:fhtd-learning}, we have that $\lim_{t\to\infty} \hat{Q}_h^{(t+1)}(s,a)= Q_h^\pi(s,a)$ almost surely for every $(s,a)$.
\end{repeattheorem}
\begin{proof}
Denote by $\hat{Q}_h^{(t)}(s,a)$ the fixed-horizon $Q$-value for horizon $h$ at time step $t$ in a given state-action pair $(s,a)$.  We can rewrite the update rule in \cref{eq:fhtd-learning} as:
\[
    \hat{Q}_h^{(t+1)}(s, a) = 
    \begin{cases}
        \hat{Q}_h^{(t)}(s, a) + \alpha_{t,h}(s,a) \left( r_t+ \gamma \hat{Q}_{h-1}^{(t)}(s_{t+1},a_{t+1}) - \hat{Q}_h^{(t)}(s, a) \right), & \text{if }(s,a)=(s_t,a_t) \\
        \hat{Q}_h^{(t)}(s, a), & \text{otherwise}
    \end{cases}
\]
where $a_{t+1}\sim \pi(\cdot|s_{t+1}), r_t\sim q(s_t,a_t)$, with $r(s,a)=\mathbb{E}_{r\sim q(s,a)}[r]$ and $Q_0^t(s,a)=0$ for every $t$ and $(s,a)$.

We exploit the structure of the update scheme, that is $Q_{h-1}^\pi$ does not depend on $Q_h^\pi,Q_{h+1}^\pi,\dots$ to prove the claim by induction.

\paragraph{Proof of convergence for $h=1$.} We first prove that the iteration converges for $h=1$, which is given by
\[
\hat{Q}_h^{(t+1)}(s, a) = \begin{cases}
\hat{Q}_h^{(t)}(s, a) + \alpha_{t,h} \left( r_t- \hat{Q}_h^{(t)}(s, a) \right), & \text{if }(s,a)=(s_t,a_t) \\
\hat{Q}_h^{(t)}(s, a), & \text{otherwise}
\end{cases}
\]
The proof is rather standard. Let $\{ t_n \} $ be the sequence of time steps at which the state-action pair $ (s_t, a_t) = (s, a)$ occurs. We define $ n $ as the index for these occurrences. The update rule at these times becomes:
\[
\hat{Q}_1^{(n+1)}(s, a) = \hat{Q}_1^{(n)}(s, a) + \alpha_{n,1}(s, a) \left( r_{t_n} - \hat{Q}_1^{(n)}(s, a) \right),
\]
where  $\alpha_{n,1}(s, a) = \alpha_{t_n,1}(s,a)$. Now, define the error term $ M_{n+1}= r_{t_n} - r(s, a)$ and note that $\mathbb{E}[M_{n+1}]=0$. Hence, the update equation can now be written in the form of a standard stochastic approximation:
\[
\hat{Q}_1^{(n+1)}(s, a) = \hat{Q}_1^{(n)}(s, a) + \alpha_{n,1}(s, a) \left( g(\hat{Q}_1^{(n)}(s, a)) + M_{n+1} \right),
\]
where $g(x) = r(s, a) - x$. Clearly $g$ is Lipschitz, of constant $1$. Since every state-action pair $(s,a)$ is visited infinitely often, the rewards are  bounded, and the sequence $\{ \alpha_{n,1}(s, a) \}$ is positive, non-increasing, and satisfy $
\sum_{n=1}^{\infty} \alpha_{n,1}(s, a)  = \infty, \quad \sum_{n=1}^{\infty} [\alpha_{n,1}(s, a) ]^2 < \infty
$ we have that the sequence $\hat{Q}_1^{(n+1)}$ converges almost surely $\lim_{n\to\infty} \hat{Q}_1^{(n)}(s,a)=r(s,a)$ by standard stochastic approximation techniques \citep{borkar-2023}.

\paragraph{Proof of convergence for generic $h$.} Assuming the iteration converges for $h-1$, we now consider the iteration for the $h$-th fixed horizon $Q$-value. As for $h=1$, we denote  by $\{ t_n \} $  the sequence of time steps at which the state-action pair $ (s_t, a_t) = (s, a)$ occurs. We define $ n $ as the index for these occurrences. The update rule at these times becomes:
\[
\hat{Q}_h^{(n+1)}(s, a) = \hat{Q}_h^{(n)}(s, a) + \alpha_{n,h}(s, a)  \left( r_{t_n} +\hat{Q}_{h-1}^{(n)}(s_{t+1},a_{t+1})- \hat{Q}_h^{(n)}(s, a) \right),
\]
where  $\alpha_{n,h}(s, a) = \alpha_{t_n,h}(s,a)$.  Now, let:
\begin{align*}
M_{n+1} &= r_{t_n} + Q_{h-1}^\pi(s_{t+1},a_{t+1}) - r(s,a) - \mathbb{E}[Q_{h-1}^\pi(s^{\prime},a^{\prime})|s=s_t,a=a_t,\pi],\\
g(x)&=r(s,a) + \mathbb{E}[Q_{h-1}^\pi(s^{\prime},a^{\prime})|s,a,\pi]-x,\\
E_{n+1}&=\hat{Q}_{h-1}^{(n)}(s_{t+1},a_{t+1})-Q_{h-1}^\pi(s_{t+1},a_{t+1}).
\end{align*}
Hence, the update rule in $(s_t,a_t)=(s,a)$ becomes:
\begin{align*}
\hat{Q}_h^{(n+1)}(s, a) &= \hat{Q}_h^{(n)}(s, a) + \alpha_{n,h}(s, a)  \left(g\left(\hat{Q}_h^{(n)}(s,a)\right)
+ M_{n+1}+E_{n+1}\right).
\end{align*}
Now, observe that: (1) $M_{n+1}$ is a martingale difference sequence; (2) $\hat{Q}_h^{(t)}$ remains bounded; (3) $g(x)$ is Lipschitz continuous; (4) $E_{n+1}=o(1)$ (bounded), from the induction hypothesis (hence $E_{n+1}\to 0$ almost surely). Then, since every state-action pair $(s,a)$ is visited infinitely often, and the sequence $\{ \alpha_{n,h}(s, a) \}$ is positive, non-increasing, and satisfy $\sum_{n=1}^{\infty} \alpha_{n,h}(s, a)  = \infty, \quad \sum_{n=1}^{\infty} [\alpha_{n,h}(s, a) ]^2 < \infty$ we have that the sequence $\hat{Q}_h^{(n)}$ converges almost surely $\lim_{n\to\infty} Q_h^n(s,a)=r(s,a) + \mathbb{E}[Q_{h-1}^\pi(s^{\prime},a^{\prime})|s,a,\pi]$ by standard stochastic approximation techniques \citep{kushner-1978,borkar-2023}. Regarding the condition $E_{n+1}=o(1)$, see \citet[Theorem 1]{wang-1996},\citet[Theorem 2.3.1]{kushner-1978}  or \citet[Chapter 2.2]{borkar-2023}.
\end{proof}

\section{Training Details}
\label{apx:training-details}

\begin{table}[t]
    \caption{Hyperparameters used for training the Q-learning policy.}
    \label{tab:hyperparams-q-learning}
    \centering
    \begin{tabular}{l|c}
        Hyperparameter & Value \\
        \hline\hline
        Total timesteps & \num{5e5} \\
        \hline
        Discount factor & \num{0.99} \\
        \hline
        Learning rate & 0.1 \\
        \hline
        Start exploration rate & \num{1.0} \\
        \hline
        Final exploration rate & \num{0.05} \\
        \hline
        Timesteps of exploration linear decay & \num{2.5e5} \\
    \end{tabular}
\end{table}

In the adapted Taxi environment, we trained a policy using Q-learning with action masking. The hyperparameters are detailed in \cref{tab:hyperparams-q-learning}. Action masking limited the agent’s choices to only valid actions, significantly boosting sample efficiency due to the high number of invalid actions in this environment. Q-values for all state-action pairs were initialized to zero.

Based on the trained policy, we developed an explainer for each of the following events: \textit{dropoff}, \textit{pickup}, \textit{refuel}, \textit{failure}, \textit{traffic}, and \textit{move}. The same set of hyperparameters, listed in \cref{tab:hyperparams-tpd}, was used across all explainers. We initialized the \gls{fhgvf} values to zero for all state-action pairs.

\begin{table}[t]
    \caption{Hyperparameters used for training the explainers.}
    \label{tab:hyperparams-tpd}
    \centering
    \begin{tabular}{l|c}
        Hyperparameter & Value \\
        \hline\hline
        Horizon & \num{30} \\
        \hline
        Total timesteps & \num{5e6} \\
        \hline
        Discount factor & \num{1.0} \\
        \hline
        Learning rate & 0.1 \\
        \hline
        Exploration rate & 0.2 \\
    \end{tabular}
\end{table}

\vskip 0.2in
\bibliography{bibliography}

\begin{thebibliography}{25}
\providecommand{\natexlab}[1]{#1}
\providecommand{\url}[1]{\texttt{#1}}
\expandafter\ifx\csname urlstyle\endcsname\relax
  \providecommand{\doi}[1]{doi: #1}\else
  \providecommand{\doi}{doi: \begingroup \urlstyle{rm}\Url}\fi

\bibitem[Borkar(2023)]{borkar-2023}
Vivek~S. Borkar.
\newblock \emph{Stochastic {{Approximation}}: {{A Dynamical Systems Viewpoint}}}, volume~48 of \emph{Texts and {{Readings}} in {{Mathematics}}}.
\newblock Springer Nature Singapore, Singapore, 2023.
\newblock ISBN 978-981-9982-76-9 978-981-9982-77-6.
\newblock \doi{10.1007/978-981-99-8277-6}.

\bibitem[Chua et~al.(2018)Chua, Calandra, McAllister, and Levine]{chua-2018}
Kurtland Chua, Roberto Calandra, Rowan McAllister, and Sergey Levine.
\newblock Deep reinforcement learning in a handful of trials using probabilistic dynamics models.
\newblock In S.~Bengio, H.~Wallach, H.~Larochelle, K.~Grauman, N.~{Cesa-Bianchi}, and R.~Garnett, editors, \emph{Advances in Neural Information Processing Systems}, volume~31. Curran Associates, Inc., 2018.

\bibitem[De~Asis et~al.(2020)De~Asis, Chan, Pitis, Sutton, and Graves]{deasis-2020}
Kristopher De~Asis, Alan Chan, Silviu Pitis, Richard Sutton, and Daniel Graves.
\newblock Fixed-{{Horizon Temporal Difference Methods}} for {{Stable Reinforcement Learning}}.
\newblock \emph{Proceedings of the AAAI Conference on Artificial Intelligence}, 34\penalty0 (04):\penalty0 3741--3748, April 2020.
\newblock ISSN 2374-3468, 2159-5399.
\newblock \doi{10.1609/aaai.v34i04.5784}.

\bibitem[Deisenroth and Rasmussen(2011)]{deisenroth-2011}
Marc Deisenroth and Carl Rasmussen.
\newblock {{PILCO}}: {{A}} model-based and data-efficient approach to policy search.
\newblock In Lise Getoor and Tobias Scheffer, editors, \emph{Proceedings of the 28th International Conference on Machine Learning ({{ICML-11}})}, {{ICML}} '11, pages 465--472, New York, NY, USA, June 2011. ACM.
\newblock ISBN 978-1-4503-0619-5.

\bibitem[Greydanus et~al.(2018)Greydanus, Koul, Dodge, and Fern]{greydanus-2018}
Samuel Greydanus, Anurag Koul, Jonathan Dodge, and Alan Fern.
\newblock Visualizing and understanding {{Atari}} agents.
\newblock In Jennifer Dy and Andreas Krause, editors, \emph{Proceedings of the 35th International Conference on Machine Learning}, volume~80 of \emph{Proceedings of Machine Learning Research}, pages 1792--1801. PMLR, 2018.

\bibitem[Janner et~al.(2019)Janner, Fu, Zhang, and Levine]{janner-2019}
Michael Janner, Justin Fu, Marvin Zhang, and Sergey Levine.
\newblock When to trust your model: {{Model-based}} policy optimization.
\newblock In H.~Wallach, H.~Larochelle, A.~Beygelzimer, F.~{dAlch{\'e}-Buc}, E.~Fox, and R.~Garnett, editors, \emph{Advances in Neural Information Processing Systems}, volume~32. Curran Associates, Inc., 2019.

\bibitem[Juozapaitis et~al.(2019)Juozapaitis, Koul, Fern, Erwig, and {Doshi-Velez}]{juozapaitis-2019}
Zoe Juozapaitis, Anurag Koul, Alan Fern, Martin Erwig, and Finale {Doshi-Velez}.
\newblock Explainable reinforcement learning via reward decomposition.
\newblock In \emph{{{IJCAI}}/{{ECAI Workshop}} on Explainable Artificial Intelligence}, 2019.

\bibitem[Kober et~al.(2013)Kober, Bagnell, and Peters]{kober-2013}
Jens Kober, J.~Andrew Bagnell, and Jan Peters.
\newblock Reinforcement learning in robotics: {{A}} survey.
\newblock \emph{The International Journal of Robotics Research}, 32\penalty0 (11):\penalty0 1238--1274, September 2013.
\newblock ISSN 0278-3649, 1741-3176.
\newblock \doi{10.1177/0278364913495721}.

\bibitem[Kushner and Clark(1978)]{kushner-1978}
Harold~J. Kushner and Dean~S. Clark.
\newblock \emph{Stochastic {{Approximation Methods}} for {{Constrained}} and {{Unconstrained Systems}}}, volume~26 of \emph{Applied {{Mathematical Sciences}}}.
\newblock Springer New York, New York, NY, 1978.
\newblock ISBN 978-0-387-90341-5 978-1-4684-9352-8.
\newblock \doi{10.1007/978-1-4684-9352-8}.

\bibitem[Lin et~al.(2021)Lin, Lam, and Fern]{lin-2021}
Zhengxian Lin, Kim-Ho Lam, and Alan Fern.
\newblock Contrastive {{Explanations}} for {{Reinforcement Learning}} via {{Embedded Self Predictions}}.
\newblock In \emph{International {{Conference}} on {{Learning Representations}}}, January 2021.

\bibitem[Lundberg and Lee(2017)]{lundberg-2017}
Scott~M Lundberg and Su-In Lee.
\newblock A unified approach to interpreting model predictions.
\newblock In I.~Guyon, U.~V. Luxburg, S.~Bengio, H.~Wallach, R.~Fergus, S.~Vishwanathan, and R.~Garnett, editors, \emph{Advances in Neural Information Processing Systems}, volume~30. Curran Associates, Inc., 2017.

\bibitem[Milani et~al.(2023)Milani, Topin, Veloso, and Fang]{milani-2023}
Stephanie Milani, Nicholay Topin, Manuela Veloso, and Fei Fang.
\newblock Explainable {{Reinforcement Learning}}: {{A Survey}} and {{Comparative Review}}.
\newblock \emph{ACM Computing Surveys}, page 3616864, August 2023.
\newblock ISSN 0360-0300, 1557-7341.
\newblock \doi{10.1145/3616864}.

\bibitem[Miller(2019)]{miller-2019}
Tim Miller.
\newblock Explanation in artificial intelligence: {{Insights}} from the social sciences.
\newblock \emph{Artificial Intelligence}, 267:\penalty0 1--38, February 2019.
\newblock ISSN 00043702.
\newblock \doi{10.1016/j.artint.2018.07.007}.

\bibitem[Moerland et~al.(2023)Moerland, Broekens, Plaat, and Jonker]{moerland-2023}
Thomas~M. Moerland, Joost Broekens, Aske Plaat, and Catholijn~M. Jonker.
\newblock Model-based {{Reinforcement Learning}}: {{A Survey}}.
\newblock \emph{Foundations and Trends{\textregistered} in Machine Learning}, 16\penalty0 (1):\penalty0 1--118, 2023.
\newblock ISSN 1935-8237, 1935-8245.
\newblock \doi{10.1561/2200000086}.

\bibitem[Polydoros and Nalpantidis(2017)]{polydoros-2017}
Athanasios~S. Polydoros and Lazaros Nalpantidis.
\newblock Survey of {{Model-Based Reinforcement Learning}}: {{Applications}} on {{Robotics}}.
\newblock \emph{Journal of Intelligent \& Robotic Systems}, 86\penalty0 (2):\penalty0 153--173, May 2017.
\newblock ISSN 0921-0296, 1573-0409.
\newblock \doi{10.1007/s10846-017-0468-y}.

\bibitem[Puiutta and Veith(2020)]{puiutta-2020}
Erika Puiutta and Eric M. S.~P. Veith.
\newblock Explainable {{Reinforcement Learning}}: {{A Survey}}.
\newblock In Andreas Holzinger, Peter Kieseberg, A~Min Tjoa, and Edgar Weippl, editors, \emph{Machine {{Learning}} and {{Knowledge Extraction}}}, volume 12279, pages 77--95. Springer International Publishing, Cham, 2020.
\newblock ISBN 978-3-030-57320-1 978-3-030-57321-8.
\newblock \doi{10.1007/978-3-030-57321-8_5}.

\bibitem[Saxena et~al.(2022)Saxena, Tullberg, and Jalden]{saxena-2022}
Vidit Saxena, Hugo Tullberg, and Joakim Jalden.
\newblock Reinforcement {{Learning}} for {{Efficient}} and {{Tuning-Free Link Adaptation}}.
\newblock \emph{IEEE Transactions on Wireless Communications}, 21\penalty0 (2):\penalty0 768--780, February 2022.
\newblock ISSN 1536-1276, 1558-2248.
\newblock \doi{10.1109/TWC.2021.3098972}.

\bibitem[Sutton and Barto(2018)]{sutton-2018}
Richard~S Sutton and Andrew~G Barto.
\newblock \emph{Reinforcement Learning: {{An}} Introduction}.
\newblock MIT press, 2018.

\bibitem[Sutton et~al.(2011)Sutton, Modayil, Delp, Degris, Pilarski, White, and Precup]{sutton-2011}
Richard~S Sutton, Joseph Modayil, Michael Delp, Thomas Degris, Patrick~M Pilarski, Adam White, and Doina Precup.
\newblock Horde: {{A}} scalable real-time architecture for learning knowledge from unsupervised sensorimotor interaction.
\newblock In \emph{The 10th International Conference on Autonomous Agents and Multiagent Systems-Volume 2}, pages 761--768, 2011.

\bibitem[Towers et~al.(2024)Towers, Kwiatkowski, Terry, Balis, De~Cola, Deleu, Goul{\~a}o, Kallinteris, Krimmel, KG, et~al.]{towers-2024}
Mark Towers, Ariel Kwiatkowski, Jordan Terry, John~U Balis, Gianluca De~Cola, Tristan Deleu, Manuel Goul{\~a}o, Andreas Kallinteris, Markus Krimmel, Arjun KG, et~al.
\newblock Gymnasium: A standard interface for reinforcement learning environments.
\newblock \emph{arXiv preprint arXiv:2407.17032}, 2024.

\bibitem[{van der Waa} et~al.(2018){van der Waa}, {van Diggelen}, van~den Bosch, and Neerincx]{vanderwaa-2018}
Jasper {van der Waa}, Jurriaan {van Diggelen}, Karel van~den Bosch, and Mark Neerincx.
\newblock Contrastive {{Explanations}} for {{Reinforcement Learning}} in terms of {{Expected Consequences}}.
\newblock \emph{IJCAI/ECAI Workshop on explainable artificial intelligence}, July 2018.

\bibitem[Vannella et~al.(2022)Vannella, Jeong, and Proutiere]{vannella-2022}
Filippo Vannella, Jaeseong Jeong, and Alexandre Proutiere.
\newblock Off-{{Policy Learning}} in {{Contextual Bandits}} for {{Remote Electrical Tilt Optimization}}.
\newblock \emph{IEEE Transactions on Vehicular Technology}, pages 1--11, 2022.
\newblock ISSN 0018-9545, 1939-9359.
\newblock \doi{10.1109/TVT.2022.3202041}.

\bibitem[Wang et~al.(1996)Wang, Chong, and Kulkarni]{wang-1996}
I-Jeng Wang, Edwin K.~P. Chong, and Sanjeev~R. Kulkarni.
\newblock Equivalent necessary and sufficient conditions on noise sequences for stochastic approximation algorithms.
\newblock \emph{Advances in Applied Probability}, 28\penalty0 (3):\penalty0 784--801, September 1996.
\newblock ISSN 0001-8678, 1475-6064.
\newblock \doi{10.2307/1428181}.

\bibitem[Watkins and Dayan(1992)]{watkins-1992}
Christopher J. C.~H. Watkins and Peter Dayan.
\newblock Q-learning.
\newblock \emph{Machine Learning}, 8\penalty0 (3-4):\penalty0 279--292, May 1992.
\newblock ISSN 0885-6125, 1573-0565.
\newblock \doi{10.1007/BF00992698}.

\bibitem[Yau et~al.(2020)Yau, Russell, and Hadfield]{yau-2020}
Herman Yau, Chris Russell, and Simon Hadfield.
\newblock What did you think would happen? {{Explaining}} agent behaviour through intended outcomes.
\newblock In H.~Larochelle, M.~Ranzato, R.~Hadsell, M.F. Balcan, and H.~Lin, editors, \emph{Advances in Neural Information Processing Systems}, volume~33, pages 18375--18386. Curran Associates, Inc., 2020.

\end{thebibliography}

\end{document}